\providecommand{\tabularnewline}{\\}
\theoremstyle{plain}
\newtheorem{thm}{\protect\theoremname}
\theoremstyle{definition}
\newtheorem{defn}[thm]{\protect\definitionname}
\theoremstyle{plain}
\newtheorem{prop}[thm]{\protect\propositionname}
\theoremstyle{plain}
\newtheorem{lem}[thm]{\protect\lemmaname}
\theoremstyle{definition}
\newtheorem{example}[thm]{\protect\examplename}
\newenvironment{proof}[1][\protect\proofname]{\par
\normalfont\topsep6\p@\@plus6\p@\relax
\trivlist
\itemindent\parindent
\item[\hskip\labelsep
\scshape
#1]\ignorespaces
}{%
\endtrivlist\@endpefalse
}
\theoremstyle{remark}
\newtheorem{rem}[thm]{\protect\remarkname}
\newcommand{\bb}{\mathbb}
\newcommand{\Cal}{\mathcal}
\icmltitlerunning{Hypothesis Testing Using Pairwise Distances and Associated
Kernels}
\providecommand{\definitionname}{Definition}
\providecommand{\examplename}{Example}
\providecommand{\lemmaname}{Lemma}
\providecommand{\proofname}{Proof}
\providecommand{\propositionname}{Proposition}
\providecommand{\remarkname}{Remark}
\providecommand{\theoremname}{Theorem}
\begin{document}
\twocolumn[
\icmltitle{Hypothesis Testing Using Pairwise Distances and Associated Kernels}

\icmlauthor{Dino Sejdinovic$^\star$}{dino.sejdinovic@gmail.com}
\icmlauthor{Arthur Gretton$^{\star,\dagger,\ast}$}{arthur.gretton@gmail.com }
\icmlauthor{Bharath Sriperumbudur$^{\star,\ast}$}{bharath@gatsby.ucl.ac.uk }
\icmlauthor{Kenji Fukumizu$^\ddagger$}{fukumizu@ism.ac.jp }
\icmladdress{$^\star$Gatsby Computational Neuroscience Unit, CSML, University College London, $^\dagger$Max Planck Institute for Intelligent Systems, T\"{u}bingen, $^\ddagger$The Institute of Statistical Mathematics, Tokyo}
\icmlkeywords{hypothesis testing, distance covariance, kernel mean embeddings}

\vskip 0.2in
]
\begin{abstract}
We provide a unifying framework linking two classes of statistics
used in two-sample and independence testing: on the one hand, the
energy distances and distance covariances from the statistics literature;
on the other, distances between embeddings of distributions to reproducing
kernel Hilbert spaces (RKHS), as established in machine learning.
The equivalence holds when energy distances are computed with semimetrics
of negative type, in which case a kernel may be defined such that
the RKHS distance between distributions corresponds exactly to the
energy distance. We determine the class of probability distributions
for which kernels induced by semimetrics are characteristic (that
is, for which embeddings of the distributions to an RKHS are injective).
Finally, we investigate the performance of this family of kernels
in two-sample and independence tests: we show in particular that the
energy distance most commonly employed in statistics is just one member
of a parametric family of kernels, and that other choices from this
family can yield more powerful tests.
\end{abstract}

\section{Introduction}

\label{submission}

The problem of testing statistical hypotheses in high dimensional
spaces is particularly challenging, and has been a recent focus of
considerable work in the statistics and machine learning communities.
On the statistical side, two-sample testing in Euclidean spaces (of
whether two independent samples are from the same distribution, or
from different distributions) can be accomplished using a so-called
energy distance as a statistic \citep{Szekely2004,Szekely2005}. Such
tests are consistent against all alternatives as long as the random
variables have finite first moments. A related dependence measure
between vectors of high dimension is the distance covariance \citep{Szekely2007,SzeRiz09},
and the resulting test is again consistent for variables with bounded
first moment. The distance covariance has had a major impact in the
statistics community, with \citet{SzeRiz09} being accompanied by
an editorial introduction and discussion. A particular advantage of
energy distance-based statistics is their compact representation in
terms of certain expectations of pairwise Euclidean distances, which
leads to straightforward empirical estimates. As a follow-up work,
\citet{Lyons2011} generalized the notion of distance covariance to
metric spaces of negative type (of which Euclidean spaces are a special
case).

On the machine learning side, two-sample tests have been formulated
based on embeddings of probability distributions into reproducing
kernel Hilbert spaces \citep{Gretton2012}, using as the test statistic
the difference between these embeddings: this statistic is called
the maximum mean discrepancy (MMD). This distance measure was applied
to the problem of testing for independence, with the associated test
statistic being the Hilbert-Schmidt Independence Criterion (HSIC)
\citep{GreBouSmoSch05,GreFukTeoSonSchSmo08_short,SmoGreSonSch07,Zhang2011}.
Both tests are shown to be consistent against all alternatives when
a characteristic RKHS is used \citep{FukGreSunSch08_short,SriGreFukLanetal10}.
Such tests can further be generalized to structured and non-Euclidean
domains, such as text strings, graphs or groups \citep{FukSriGreSch09_short}. 

Despite their striking similarity, the link between energy distance-based
tests and kernel-based tests has been an open question. In the discussion
of \citet{SzeRiz09}, \citet[p.~1289]{GreFukSri09} first explored
this link in the context of independence testing, and stated that
interpreting the distance-based independence statistic as a kernel
statistic is not straightforward, since Bochner\textquoteright{}s
theorem does not apply to the choice of weight function used in the
definition of Brownian distance covariance (we briefly review this
argument in Section~\ref{subsec:charac} of the Appendix). \citet[Rejoinder,
p.~1303]{SzeRiz09}
confirmed this conclusion, and commented that RKHS-based dependence
measures do not seem to be formal extensions of Brownian distance
covariance because the weight function is not integrable. Our contribution
resolves this question and shows that RKHS-based dependence measures
are precisely the formal extensions of Brownian distance covariance,
where the problem of non-integrability of weight functions is circumvented
by using translation-variant kernels, i.e., \emph{distance-induced
kernels}, a novel family of kernels that we introduce in Section 2.2.

In the case of two-sample testing, we demonstrate that energy distances
are in fact maximum mean discrepancies arising from the same family
of distance-induced kernels. A number of interesting consequences
arise from this insight: first, we show that the energy distance (and
distance covariance) derives from a particular parameter choice from
a larger family of kernels: this choice may not yield the most sensitive
test. Second, results from \citet{GreFukHarSri09_short,Zhang2011}
may be applied to get consistent two-sample and independence tests
for the energy distance, without using bootstrap, which perform much
better than the upper bound proposed by \citet{Szekely2007} as an
alternative to the bootstrap. Third, in relation to \citet{Lyons2011},
we obtain a new family of characteristic kernels arising from semimetric
spaces of negative type (where the triangle inequality need not hold),
which are quite unlike the characteristic kernels defined via Bochner's
theorem \citep{SriGreFukLanetal10}. 

The structure of the paper is as follows: In Section \ref{sec: Definitions-and-Notation},
we provide the necessary definitions from RKHS theory, and the relation
between RKHS and semimetrics of negative type. In Section \ref{sec:Energy-Distance-and},
we review both the energy distance and distance covariance. We relate
these quantities in Sections \ref{sub:energy_distance_with_kernels}
and \ref{sub:dcov_with_kernels} to the Maximum Mean Discrepancy (MMD)
and the Hilbert-Schmidt Independence Criterion (HSIC), respectively.
We give conditions for these quantities to distinguish between probability
measures in Section \ref{sec:Distinguishing-probability-distributions},
thus obtaining a new family of characteristic kernels. Empirical estimates
of these quantities and associated two-sample and independence tests
are described in Section \ref{sec:Consistency}. Finally, in Section
\ref{sec:Experiments}, we investigate the performance of the test
statistics on a variety of testing problems, which demonstrate the
strengths of the new kernel family.

\section{\label{sec: Definitions-and-Notation}Definitions and Notation}

In this section, we introduce concepts and notation required to understand
reproducing kernel Hilbert spaces (Section \ref{sub:RKHS}), and distribution
embeddings into RKHS. We then introduce semimetrics (Section \ref{sub:Semimetrics-negative-type}),
and review the relation of semimetrics of negative type to RKHS kernels.

\subsection{RKHS Definitions\label{sub:RKHS}}

Unless stated otherwise, we will assume that $\mathcal{Z}$ is any
topological space. \vspace{.5mm}
\begin{defn}
(\textbf{RKHS}) Let $\mathcal{H}$ be a Hilbert space of real-valued
functions defined on $\mathcal{Z}$. A function $k:\mathcal{Z}\times\mathcal{Z}\to\mathbb{R}$
is called \emph{a reproducing kernel} of $\mathcal{H}$ if (i) $\forall z\in\mathcal{Z},\;\; k(\cdot,z)\in\mathcal{H}$,
and (ii) $\forall z\in\mathcal{Z},\,\forall f\in\mathcal{H},\;\;\left\langle f,k(\cdot,z)\right\rangle _{\mathcal{H}}=f(z).$
If $\mathcal{H}$ has a reproducing kernel, it is called \emph{a reproducing
kernel Hilbert space} (RKHS).\vspace{-1mm}
\end{defn}
According to the Moore-Aronszajn theorem \citep[p.~19]{BerTho04},
for every symmetric, positive definite function $k:\mathcal{Z}\times\mathcal{Z}\to\mathbb{R}$,
there is an associated RKHS $\mathcal{H}_{k}$ of real-valued functions
on $\mathcal{Z}$ with reproducing kernel $k$. The map $\varphi:\mathcal{Z}\to\mathcal{H}_{k}$,
$\varphi:z\mapsto k(\cdot,z)$ is called the canonical feature map
or the Aronszajn map of $k$. We will say that $k$ is a nondegenerate
kernel if its Aronszajn map is injective.

\subsection{Semimetrics of Negative Type\label{sub:Semimetrics-negative-type}}

We will work with the notion of semimetric of negative type on a non-empty
set $\mathcal{Z}$, where the {}``distance'' function need not satisfy
the triangle inequality. Note that this notion of semimetric is different
to that which arises from the seminorm, where distance between two
distinct points can be zero (also called pseudonorm). 
\begin{defn}
(\textbf{Semimetric}) Let $\mathcal{Z}$ be a non-empty set and let
$\rho:\mathcal{Z}\times\mathcal{Z}\to[0,\infty)$ be a function such
that $\forall z,z'\in\mathcal{Z}$, (i) $\rho(z,z')=0$ if and only
if $z=z'$, and (ii) $\rho(z,z')=\rho(z',z)$. Then $(\mathcal{Z},\rho)$
is said to be a semimetric space and $\rho$ is called a semimetric
on $\mathcal{Z}$. If, in addition, (iii) $\forall z,z',z''\in\mathcal{Z}$,
$\rho(z',z'')\leq\rho(z,z')+\rho(z,z'')$, $(\mathcal{Z},\rho)$ is
said to be a metric space and $\rho$ is called a metric on $\mathcal{Z}$.
\end{defn}

\begin{defn}
(\textbf{Negative type}) The semimetric space $(\mathcal{Z},\rho)$
is said to have negative type if $\forall n\geq2$, $z_{1},\ldots,z_{n}\in\mathcal{Z}$,
and $\alpha_{1},\ldots,\alpha_{n}\in\mathbb{R}$ with $\sum_{i=1}^{n}\alpha_{i}=0$,
\begin{equation}
\sum_{i=1}^{n}\sum_{j=1}^{n}\alpha_{i}\alpha_{j}\rho(z_{i},z_{j})\leq0.\label{eq: CND}
\end{equation}
\end{defn}
Note that in the terminology of \citet{BerChrRes84}, $\rho$
satisfying \eqref{eq: CND} is said to be a \emph{negative definite}
function. The following theorem is a direct consequence of \citet[Proposition
3.2, p.~82]{BerChrRes84}.
\begin{prop}
\label{pro:semimetric_hilbertian_metric}$\rho$ is a semimetric of
negative type if and only if there exists a Hilbert space $\mathcal{H}$
and an injective map $\varphi:\mathcal{Z}\to\mathcal{H}$, such that
\begin{equation}
\rho(z,z')=\left\Vert \varphi(z)-\varphi(z')\right\Vert
_{\mathcal{H}}^{2}\label{eq: rho_via_Hilbert}\vspace{-2.5mm}
\end{equation}
\end{prop}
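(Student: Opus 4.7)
The plan is to prove both directions, with the easier direction ($\Leftarrow$) serving as a warmup and the harder direction ($\Rightarrow$) realized by an explicit Schoenberg-type construction of a reproducing kernel from $\rho$.

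For the easy direction, suppose $\varphi:\mathcal{Z}\to\mathcal{H}$ is injective and $\rho(z,z')=\|\varphi(z)-\varphi(z')\|_{\mathcal{H}}^{2}$. Injectivity plus the norm being $0$ only at $0$ gives $\rho(z,z')=0\iff z=z'$, and symmetry is immediate, so $\rho$ is a semimetric. For negative type, expand $\|\varphi(z_i)-\varphi(z_j)\|^{2}=\|\varphi(z_i)\|^{2}+\|\varphi(z_j)\|^{2}-2\langle\varphi(z_i),\varphi(z_j)\rangle$, multiply by $\alpha_i\alpha_j$, and sum; using $\sum_i\alpha_i=0$ to kill the norm-squared terms leaves $-2\|\sum_i\alpha_i\varphi(z_i)\|_{\mathcal{H}}^{2}\leq 0$.

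For the nontrivial direction, I would fix an arbitrary base point $z_0\in\mathcal{Z}$ and define
\begin{equation*}
k(z,z')=\tfrac{1}{2}\bigl[\rho(z,z_0)+\rho(z',z_0)-\rho(z,z')\bigr].
\end{equation*}
This is symmetric and, I claim, positive definite. To verify this, given $z_1,\dots,z_n$ and $c_1,\dots,c_n\in\mathbb{R}$, I would augment with $\alpha_0=-\sum_{i=1}^n c_i$, $\alpha_i=c_i$ ($i\geq 1$), so that $\sum_{i=0}^n\alpha_i=0$ and the negative-type hypothesis applies to the extended configuration $z_0,z_1,\dots,z_n$. Expanding $\sum_{i,j=0}^{n}\alpha_i\alpha_j\rho(z_i,z_j)\leq 0$, using $\rho(z_0,z_0)=0$, and rearranging yields precisely $\sum_{i,j=1}^n c_i c_j k(z_i,z_j)\geq 0$, which is the required positive definiteness.

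Given positive definiteness, Moore-Aronszajn (quoted earlier) produces an RKHS $\mathcal{H}_k$ with canonical feature map $\varphi(z)=k(\cdot,z)$. A direct computation using $k(z,z)=\rho(z,z_0)$ gives
\begin{equation*}
\|\varphi(z)-\varphi(z')\|_{\mathcal{H}_k}^{2}=k(z,z)+k(z',z')-2k(z,z')=\rho(z,z'),
\end{equation*}
establishing \eqref{eq: rho_via_Hilbert}. Injectivity of $\varphi$ then follows for free: if $\varphi(z)=\varphi(z')$ then $\rho(z,z')=0$, which by the semimetric axiom forces $z=z'$. The main obstacle is purely algebraic bookkeeping in the positive-definiteness step, namely choosing $\alpha_0$ so that the sum-to-zero constraint is activated on the configuration $\{z_0,z_1,\dots,z_n\}$; the rest of the argument is essentially formal.
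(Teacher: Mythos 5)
Your proof is correct, and it takes essentially the route the paper relies on: the paper deduces this proposition directly from \citet[Proposition 3.2]{BerChrRes84}, whose underlying argument is exactly your Schoenberg-type construction, and your two steps reproduce the paper's own machinery --- the positive definiteness of the distance-induced kernel is Lemma \ref{lem:kernel-from-semimetric}, and the identity $\left\Vert k(\cdot,z)-k(\cdot,z')\right\Vert_{\mathcal{H}_k}^{2}=\rho(z,z')$ together with injectivity of the Aronszajn map is Proposition \ref{pro: properties of Krho}. So the proposal is a sound, self-contained version of the same argument, with no gaps.
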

This shows that $(\mathbb{R}^{d},\left\Vert \cdot-\cdot\right\Vert ^{2})$
is of negative type. From \citet[Corollary 2.10, p.~78]{BerChrRes84},
we have that:\vspace{.5mm}
\begin{prop}
\label{pro:power and log of cnd}If $\rho$ satisfies \eqref{eq: CND},
then so does $\rho^{q}$, for $0<q<1$.\vspace{-1mm}
\end{prop}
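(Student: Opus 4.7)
The plan is to combine Schoenberg-type reasoning with the classical subordination integral $x^{q}=\frac{q}{\Gamma(1-q)}\int_{0}^{\infty}(1-e^{-tx})\,t^{-q-1}\,dt$ valid for $x\geq 0$ and $0<q<1$. I will first show that $1-e^{-t\rho}$ satisfies \eqref{eq: CND} for every $t>0$, and then obtain $\rho^{q}$ by integrating this family of negative-type functions against a positive weight.

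First I would invoke Proposition~\ref{pro:semimetric_hilbertian_metric} to write $\rho(z,z')=\|\varphi(z)-\varphi(z')\|_{\mathcal{H}}^{2}$ for some Hilbert embedding $\varphi$. Then for every $t>0$ the function $(z,z')\mapsto e^{-t\rho(z,z')}=e^{-t\|\varphi(z)-\varphi(z')\|^{2}}$ is a Gaussian kernel on $\mathcal{H}$ pulled back by $\varphi$, and hence is positive definite. Consequently, for any $z_{1},\dots,z_{n}\in\mathcal{Z}$ and any $\alpha_{1},\dots,\alpha_{n}\in\mathbb{R}$ with $\sum_{i}\alpha_{i}=0$,
\[
\sum_{i,j}\alpha_{i}\alpha_{j}\bigl(1-e^{-t\rho(z_{i},z_{j})}\bigr)=-\sum_{i,j}\alpha_{i}\alpha_{j}e^{-t\rho(z_{i},z_{j})}\leq 0,
\]
because the constant term $\sum_{i,j}\alpha_{i}\alpha_{j}=(\sum_{i}\alpha_{i})^{2}=0$. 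So $1-e^{-t\rho}$ satisfies \eqref{eq: CND} for each $t>0$.

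Next I would substitute the integral representation into the CND sum for $\rho^{q}$:
\[
\sum_{i,j}\alpha_{i}\alpha_{j}\rho(z_{i},z_{j})^{q}=\frac{q}{\Gamma(1-q)}\int_{0}^{\infty}\!\!\Bigl[\sum_{i,j}\alpha_{i}\alpha_{j}\bigl(1-e^{-t\rho(z_{i},z_{j})}\bigr)\Bigr]\frac{dt}{t^{q+1}}.
\]
The bracketed integrand is nonpositive by the previous step, and the weight $t^{-q-1}\,dt$ is positive, so the whole integral is $\leq 0$. Swapping the finite sum and the integral is legitimate: for small $t$ the integrand behaves like $t\cdot\rho(z_i,z_j)$ (integrable against $t^{-q-1}$ since $q<1$) and for large $t$ it is bounded, integrable against $t^{-q-1}$ since $q>0$, so Fubini applies termwise.

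The main subtlety is really just verifying the integral identity and the interchange of sum and integral; both are routine once the Schoenberg step is in place. The conceptual content is entirely in the observation that negative-type functions form a convex cone that is closed under positive linear combinations and pointwise limits, so integrating the one-parameter family $\{1-e^{-t\rho}\}_{t>0}$ against $c_{q}t^{-q-1}\,dt$ preserves negative-definiteness and yields $\rho^{q}$.
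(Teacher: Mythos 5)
Your proof is correct. The paper itself does not prove this proposition --- it is quoted directly as Corollary 2.10 of \citet{BerChrRes84} --- so there is no internal proof to compare against; what you have written is essentially the classical argument behind that corollary. Your two ingredients are exactly the standard ones: positive definiteness of $e^{-t\rho}$ for each $t>0$ (you obtain it by pulling back the Gaussian kernel through the Hilbert-space embedding of Proposition~\ref{pro:semimetric_hilbertian_metric}, which is legitimate and non-circular since that proposition does not depend on the present one; alternatively you could invoke Schoenberg's theorem directly, which needs only symmetry and \eqref{eq: CND} and avoids using the semimetric axioms at all), together with the L\'{e}vy subordination formula $x^{q}=\frac{q}{\Gamma(1-q)}\int_{0}^{\infty}(1-e^{-tx})\,t^{-q-1}\,dt$. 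Your endpoint checks (integrability near $t=0$ because $q<1$, and near $t=\infty$ because $q>0$) are the right ones, and since the sum over $(i,j)$ is finite and each term's integral converges absolutely, the interchange is mere linearity of the integral --- invoking Fubini is more machinery than is needed. The degenerate case $\rho(z_{i},z_{j})=0$ (in particular $i=j$) is covered trivially, since both sides of the subordination identity vanish, so the argument is complete as stated; what it buys over the paper's citation is a self-contained, elementary derivation that makes explicit why the restriction $0<q<1$ enters.
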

Therefore, by taking $q=1/2$, we conclude that all Euclidean spaces
are of negative type. While \citet[p.~9]{Lyons2011} also uses the
result in Proposition \ref{pro:semimetric_hilbertian_metric}, he
studies embeddings to general Hilbert spaces, and the relation with
the theory of reproducing kernel Hilbert spaces is not exploited.
Semimetrics of negative type and symmetric positive definite kernels
are in fact closely related, as summarized in the following Lemma
based on \citet[Lemma 2.1, p.~74]{BerChrRes84}.\vspace{.5mm}
\begin{lem}
\label{lem:kernel-from-semimetric}Let $\mathcal{Z}$ be a nonempty
set, and let $\rho$ be a semimetric on $\mathcal{Z}$. Let $z_{0}\in\mathcal{Z}$,
and denote $k(z,z')=\rho(z,z_{0})+\rho(z',z_{0})-\rho(z,z')$. Then
$k$ is positive definite if and only if $\rho$ satisfies \eqref{eq:
CND}.\vspace{-1mm}
\end{lem}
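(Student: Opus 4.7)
The proof is a direct algebraic manipulation that exploits the symmetry between the two conditions via an appropriate choice of coefficients. The plan is to verify both implications by expanding $\sum_{i,j}\gamma_i\gamma_j k(z_i,z_j)$ for suitable coefficients $\gamma_i$ and comparing with a weighted sum of $\rho$.

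For the $(\Leftarrow)$ direction, I would fix $n$, points $z_1,\ldots,z_n\in\mathcal{Z}$ and real coefficients $\beta_1,\ldots,\beta_n$, and set $S=\sum_{i=1}^n\beta_i$. The idea is to augment the sample by including $z_0$ with coefficient $\alpha_0=-S$ and $\alpha_i=\beta_i$ for $i\geq1$, so that $\sum_{i=0}^n\alpha_i=0$. Applying the negative type assumption to this augmented configuration, and using $\rho(z_0,z_0)=0$, one obtains
\begin{equation*}
\sum_{i,j=1}^n\beta_i\beta_j\rho(z_i,z_j)-2S\sum_{i=1}^n\beta_i\rho(z_i,z_0)\leq 0.
\end{equation*}
On the other hand, expanding $\sum_{i,j=1}^n\beta_i\beta_j k(z_i,z_j)$ by definition of $k$ yields exactly $2S\sum_i\beta_i\rho(z_i,z_0)-\sum_{i,j}\beta_i\beta_j\rho(z_i,z_j)$, which is therefore nonnegative. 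Hence $k$ is positive definite.

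For the $(\Rightarrow)$ direction, suppose $k$ is positive definite and take any $\alpha_1,\ldots,\alpha_n\in\mathbb{R}$ with $\sum_i\alpha_i=0$. Expanding $\sum_{i,j}\alpha_i\alpha_j k(z_i,z_j)$ in the same way gives
\begin{equation*}
\sum_{i,j}\alpha_i\alpha_j k(z_i,z_j)=2\Bigl(\sum_i\alpha_i\Bigr)\sum_j\alpha_j\rho(z_j,z_0)-\sum_{i,j}\alpha_i\alpha_j\rho(z_i,z_j),
\end{equation*}
and the zero-sum constraint on the $\alpha_i$ kills the first term, leaving $\sum_{i,j}\alpha_i\alpha_j k(z_i,z_j)=-\sum_{i,j}\alpha_i\alpha_j\rho(z_i,z_j)$. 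Positive definiteness of $k$ then forces $\sum_{i,j}\alpha_i\alpha_j\rho(z_i,z_j)\leq 0$, which is the negative type condition.

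There is essentially no obstacle here; the only thing to be careful about is that $k$ as defined really is symmetric (immediate from symmetry of $\rho$) and that $\rho(z_0,z_0)=0$ is used exactly once in the forward direction to discard the $\alpha_0^2\rho(z_0,z_0)$ term. The whole argument is the observation that the map $\rho\mapsto k$ defined here is, up to the bookkeeping with $z_0$, the finite-dimensional analogue of the identity $\|\varphi(z)-\varphi(z')\|^2=\langle\varphi(z),\varphi(z)\rangle+\langle\varphi(z'),\varphi(z')\rangle-2\langle\varphi(z),\varphi(z')\rangle$ that underlies Proposition~\ref{pro:semimetric_hilbertian_metric}.
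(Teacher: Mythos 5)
Your proof is correct, and it is the standard argument: the paper does not spell out a proof but defers to Lemma 2.1 of \citet{BerChrRes84}, whose proof is exactly your computation --- expand $\sum_{i,j}\beta_i\beta_j k(z_i,z_j)$, and for the converse of positive definiteness adjoin $z_0$ with coefficient $-\sum_i\beta_i$ to apply \eqref{eq: CND}, using $\rho(z_0,z_0)=0$. So you have reproduced essentially the same proof as the cited source; no issues.
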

We call the kernel $k$ defined above the \emph{distance-induced kernel},
and say that it is\emph{ }induced by the semimetric $\rho$. For brevity,
we will drop {}``induced'' hereafter, and say that $k$ is simply
the \emph{distance kernel} (with some abuse of terminology). In addition,
we will typically work with distance kernels scaled by $1/2$. Note
that $k(z_{0},z_{0})=0$, so distance kernels are not strictly positive
definite (equivalently, $k(\cdot,z_{0})=0$). By varying {}``the
point at the center'' $z_{0}$, one obtains a family $\mathcal{K}_{\rho}=\left\{ \frac{1}{2}\left[\rho(z,z_{0})+\rho(z',z_{0})-\rho(z,z')\right]\right\} _{z_{0}\in\mathcal{Z}}$
of distance kernels induced by $\rho$. We may now express \eqref{eq: rho_via_Hilbert}
from Proposition \ref{pro:semimetric_hilbertian_metric} in terms
of the canonical feature map for the RKHS $\mathcal{H}_{k}$ (proof in
Appendix~\ref{subsec:proofs}).
\begin{prop}
\label{pro: properties of Krho}Let $(\mathcal{Z},\rho)$ be a semimetric
space of negative type, and $k\in\mathcal{K}_{\rho}$. Then: 
\begin{enumerate}
\item $k$ is nondegenerate, i.e., the Aronszajn map $z\mapsto k(\cdot,z)$
is injective.
\item $\rho(z,z')=k(z,z)+k(z',z')-2k(z,z')=\left\Vert
k(\cdot,z)-k(\cdot,z')\right\Vert_{\mathcal{H}_{k}}^{2}.$
\end{enumerate}
\end{prop}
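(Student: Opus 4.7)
The plan is to prove part 2 first by direct computation, and then to deduce part 1 as an immediate consequence of part 2 together with the defining property (i) of a semimetric.

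For part 2, I would substitute the definition $k(z,z') = \tfrac{1}{2}[\rho(z,z_0) + \rho(z',z_0) - \rho(z,z')]$ into the expression $k(z,z) + k(z',z') - 2k(z,z')$. Using $\rho(z,z) = 0$, one finds $k(z,z) = \rho(z,z_0)$ and similarly $k(z',z') = \rho(z',z_0)$, while $2k(z,z') = \rho(z,z_0) + \rho(z',z_0) - \rho(z,z')$; the terms then cancel to leave exactly $\rho(z,z')$. For the second equality, I would expand the RKHS norm squared and apply the reproducing property from the RKHS definition twice:
\begin{equation*}
\|k(\cdot,z) - k(\cdot,z')\|_{\mathcal{H}_k}^2 = k(z,z) - 2k(z,z') + k(z',z'),
\end{equation*}
which by what was just shown equals $\rho(z,z')$. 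The positive definiteness of $k$ needed to make $\mathcal{H}_k$ well-defined is exactly what Lemma~\ref{lem:kernel-from-semimetric} guarantees, so this step is available.

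For part 1, suppose $k(\cdot,z) = k(\cdot,z')$ in $\mathcal{H}_k$. Then $\|k(\cdot,z) - k(\cdot,z')\|_{\mathcal{H}_k}^2 = 0$, so by part 2 we get $\rho(z,z') = 0$, and hence $z = z'$ by property (i) in the definition of a semimetric. This proves the Aronszajn map is injective.

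There is no real obstacle here: part 2 is a routine algebraic identity plus one application of the reproducing property, and part 1 is a one-line corollary. The only subtlety worth flagging is the logical order, namely that it is cleanest to derive the generator-of-$\rho$ identity first and then obtain injectivity from it, rather than trying to prove injectivity directly from the explicit formula for $k$.
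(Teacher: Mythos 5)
Your proof is correct, and it reverses the paper's order of argument rather than reproducing it. The paper proves part 1 directly from the explicit formula: assuming $k(w,z)=k(w,z')$ for all $w\in\mathcal{Z}$, it substitutes the definition of the distance kernel to obtain $\rho(z,z_{0})-\rho(z,w)=\rho(z',z_{0})-\rho(z',w)$ for all $w$, then inserts the two particular choices $w=z$ and $w=z'$ to get $\rho(z,z')=-\rho(z,z')=0$, hence $z=z'$; part 2 is then noted to follow readily by expressing $k$ in terms of $\rho$, which is exactly the cancellation you carry out. You instead establish the identity $\rho(z,z')=k(z,z)+k(z',z')-2k(z,z')=\left\Vert k(\cdot,z)-k(\cdot,z')\right\Vert_{\mathcal{H}_{k}}^{2}$ first and obtain injectivity as a one-line corollary via semimetric axiom (i). The two routes are mathematically equivalent --- your norm identity performs the same cancellation as the paper's substitutions $w=z$, $w=z'$ --- but they lean on slightly different things: the paper's part-1 argument uses only pointwise equality of the functions $k(\cdot,z)$ and $k(\cdot,z')$ and never touches the Hilbert-space structure, whereas yours goes through the reproducing property and the positive definiteness guaranteed by Lemma~\ref{lem:kernel-from-semimetric} (which you rightly flag as what makes $\mathcal{H}_{k}$ available). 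Your ordering has the small expository advantage of making the isometric-embedding picture explicit --- injectivity appears as the statement that the Aronszajn map isometrically embeds $(\mathcal{Z},\rho^{1/2})$ into $\mathcal{H}_{k}$, which is precisely the remark the paper draws immediately after the proposition --- while the paper's version of part 1 is marginally more elementary. There is no gap in your argument.
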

Note that Proposition \ref{pro: properties of Krho} implies that
the Aronszajn map $z\mapsto k(\cdot,z)$ is an isometric embedding
of a metric space $(\mathcal{Z},\rho^{1/2})$ into $\mathcal{H}_{k}$,
for every $k\in\mathcal{K}_{\rho}$.

\subsection{Kernels Inducing Semimetrics}

We now further develop the link between semimetrics of negative type
and kernels. Let $k$ be any nondegenerate reproducing kernel on $\mathcal{Z}$
(for example, every strictly positive definite $k$ is nondegenerate).
Then, by Proposition \ref{pro:semimetric_hilbertian_metric}, 
\begin{equation}
\rho(z,z')=k(z,z)+k(z',z')-2k(z,z')\label{eq:rhoFromK}
\end{equation}
defines a valid semimetric $\rho$ of negative type on $\mathcal{Z}$.
We will say that $k$ generates $\rho$. It is clear that every distance
kernel $\tilde{k}\in\mathcal{K}_{\rho}$ also generates $\rho$, and
that $\tilde{k}$ can be expressed as:
\begin{equation}
\tilde{k}(z,z')=k(z,z')+k(z_{0},z_{0})-k(z,z_{0})-k(z',z_{0}),
\label{eq:generating_kernels}
\end{equation}
for some $z_{0}\in\mathcal{Z}$. In addition, $k\in\mathcal{K}_{\rho}$
if and only if $k(z_{0},z_{0})=0$ for some $z_{0}\in\mathcal{Z}$.
Hence, it is clear that any strictly positive definite kernel, e.g.,
the Gaussian kernel $e^{-\sigma\left\Vert z-z'\right\Vert ^{2}}$,
is \emph{not} a distance kernel.
\begin{example}
\label{exa: various_exponents}Let $\mathcal{Z}=\mathbb{R}^{d}$ and
write $\rho_{q}(z,z')=\left\Vert z-z'\right\Vert ^{q}$. By combining
Propositions \ref{pro:semimetric_hilbertian_metric} and \ref{pro:power and log of cnd},
$\rho_{q}$ is a valid semimetric of negative type for $0<q\leq2$.
It is a metric of negative type if $q\leq1$. The corresponding distance
kernel {}``centered at zero'' is given by
\begin{equation}
k_{q}(z,z')=\frac{1}{2}\left(\left\Vert z\right\Vert ^{q}+\left\Vert z'\right\Vert ^{q}-\left\Vert z-z'\right\Vert ^{q}\right).
\end{equation}

\end{example}

\begin{example}
Let $\mathcal{Z}=\mathbb{R}^{d}$, and consider the Gaussian kernel
$k(z,z')=e^{-\sigma\left\Vert z-z'\right\Vert ^{2}}$. The induced
semimetric is $\rho(z,z')=2\left[1-e^{-\sigma\left\Vert z-z'\right\Vert ^{2}}\right]$.
There are many other kernels that generate $\rho$, however; for example,
the distance kernel induced by $\rho$ and {}``centered at zero''
is $\tilde{k}(z,z')=e^{-\sigma\left\Vert z-z'\right\Vert ^{2}}+1-e^{-\sigma\left\Vert z\right\Vert ^{2}}-e^{-\sigma\left\Vert z'\right\Vert ^{2}}$.
\end{example}

\section{Distances and Covariances}

In this section, we begin with a description of the energy distance,
which measures distance between distributions; and distance covariance,
which measures dependence. We then demonstrate that the former is
a special instance of the maximum mean discrepancy (a kernel measure
of distance on distributions), and the latter an instance of the Hilbert-Schmidt
Independence criterion (a kernel dependence measure). We will denote
by $\mathcal{M}(\mathcal{Z})$ the set of all finite signed Borel
measures on $\mathcal{Z}$, and by $\mathcal{M}_{+}^{1}(\mathcal{Z})$
the set of all Borel probability measures on $\mathcal{Z}$.

\subsection{\label{sec:Energy-Distance-and}Energy Distance and Distance Covariance}

\citet{Szekely2004,Szekely2005} use the following measure of statistical
distance between two probability measures $P$ and $Q$ on $\mathbb{R}^{d}$,
termed the \emph{energy distance}:
\setlength{\arraycolsep}{0.0em}
\begin{eqnarray}
D_{E}(P,Q)&{}={}& 2\mathbb{E}_{ZW}\left\Vert Z-W\right\Vert
-\mathbb{E}_{ZZ'}\left\Vert Z-Z'\right\Vert\nonumber\\
&{}{}&\qquad - \mathbb{E}_{WW'}\left\Vert
W-W'\right\Vert,\label{eq: energy_distance}
\end{eqnarray}
where $Z,Z'\overset{i.i.d.}{\sim}P$ and $W,W'\overset{i.i.d.}{\sim}Q$.
This quantity characterizes the equality of distributions, and in
the scalar case, it coincides with twice the Cramer-Von Mises distance.
We may generalize it to a semimetric space of negative type $(\mathcal{Z},\rho)$,
with the expression for this generalized distance covariance $D_{E,\rho}(P,Q)$
being of the same form as \eqref{eq: energy_distance}, with the Euclidean
distance replaced by $\rho$. Note that the negative type of $\rho$
implies the non-negativity of $D_{E,\rho}$. In Section \ref{sub:energy_distance_with_kernels},
we will show that for every $\rho$, $D_{E,\rho}$ is precisely the MMD
associated to a particular kernel $k$ on $\mathcal{Z}$.

Now, let $X$ be a random vector on $\mathbb{R}^{p}$ and $Y$ a random
vector on $\mathbb{R}^{q}$. The distance covariance was introduced
in \citet{Szekely2007,SzeRiz09} to address the problem of testing
and measuring dependence between $X$ and $Y$, in terms of a weighted
$L_{2}$-distance between characteristic functions of the joint distribution
of $X$ and $Y$ and the product of their marginals. Given a
particular choice of weight function, it can be computed in
terms of certain expectations of pairwise Euclidean distances,
\setlength{\arraycolsep}{0.0em}
\begin{eqnarray}
\label{eq: dCov_in_terms_of_distances}
\mathcal{V}^{2}(X,Y)&{} ={} & \mathbb{E}_{XY}\mathbb{E}_{X'Y'}\left\Vert
X-X'\right\Vert \left\Vert Y-Y'\right\Vert\\
 &{}{}& +  \mathbb{E}_{X}\mathbb{E}_{X'}\left\Vert X-X'\right\Vert
\mathbb{E}_{Y}\mathbb{E}_{Y'}\left\Vert Y-Y'\right\Vert \nonumber \\
 &{}{}&\, -2\mathbb{E}_{X'Y'}\left[\mathbb{E}_{X}\left\Vert
X-X'\right\Vert \mathbb{E}_{Y}\left\Vert Y-Y'\right\Vert \right],\nonumber
\end{eqnarray}
where $(X,Y)$ and $(X',Y')$ are $\overset{i.i.d.}{\sim}P_{XY}$.
Recently, \citet{Lyons2011} established that the generalization of
the distance covariance is possible to metric spaces of negative type,
with the expression for this generalized distance covariance $\mathcal{V}_{\rho_{\mathcal{X}},\rho_{\mathcal{Y}}}^{2}(X,Y)$
being of the same form as \eqref{eq: dCov_in_terms_of_distances},
with Euclidean distances replaced by metrics of negative type $\rho_{\mathcal{X}}$
and $\rho_{\mathcal{Y}}$ on domains $X$ and $Y$, respectively.
In Section \ref{sub:dcov_with_kernels}, we will show that the generalized
distance covariance of a pair of random variables $X$ and $Y$ is
precisely HSIC associated to a particular kernel $k$ on the product
of domains of $X$ and $Y$.

\subsection{\label{sub:energy_distance_with_kernels}Maximum Mean Discrepancy}

The notion of the feature map in an RKHS (Section \ref{sub:RKHS})
can be extended to kernel embeddings of probability measures \citep{BerTho04,SriGreFukLanetal10}. 
\begin{defn}
(\textbf{Kernel embedding}) Let $k$ be a kernel on $\mathcal{Z}$,
and $P\in\mathcal{M}_{+}^{1}(\mathcal{Z})$. The \emph{kernel embedding}
of $P$ into the RKHS $\mathcal{H}_{k}$ is $\mu_{k}(P)\in\mathcal{H}_{k}$
such that $\mathbb{E}_{Z\sim P}f(Z)=\left\langle f,\mu_{k}(P)\right\rangle _{\mathcal{H}_{k}}$
for all $f\in\mathcal{H}_{k}$.
\end{defn}
Alternatively, the kernel embedding can be defined by the Bochner
expectation $\mu_{k}(P)=\mathbb{E}_{Z\sim P}k(\cdot,Z)$. By the Riesz
representation theorem, a sufficient condition for the existence of
$\mu_{k}(P)$ is that $k$ is Borel-measurable and that $\mathbb{E}_{Z\sim P}k^{1/2}(Z,Z)<\infty$.
If $k$ is a bounded continuous function, this is obviously true for
all $P\in\mathcal{M}_{+}^{1}(\mathcal{Z})$. Kernel embeddings can
be used to induce metrics on the spaces of probability measures, giving
the maximum mean discrepancy (MMD), 
\setlength{\arraycolsep}{0.0em}
\begin{eqnarray}
\gamma_{k}^{2}(P,Q) &{} ={} & \left\Vert \mu_{k}(P)-\mu_{k}(Q)\right\Vert
_{\mathcal{H}_{k}}^{2}\nonumber \\
 &{} ={} & \mathbb{E}_{ZZ'}k(Z,Z')+\mathbb{E}_{WW'}k(W,W')\nonumber\\
&{}{}&\qquad - 2\mathbb{E}_{ZW}k(Z,W),\label{eq: MMD}
\end{eqnarray}
where $Z,Z'\overset{i.i.d.}{\sim}P$ and $W,W'\overset{i.i.d.}{\sim}Q$.
If the restriction of $\mu_{k}$ to some $\mathcal{P}(\mathcal{Z})\subseteq\mathcal{M}_{+}^{1}(\mathcal{Z})$
is well defined and injective, then $k$ is said to be characteristic
to $\mathcal{P}(\mathcal{Z})$, and it is said to be characteristic
(without further qualification) if it is characteristic to $\mathcal{M}_{+}^{1}(\mathcal{Z})$.
When $k$ is characteristic, $\gamma_{k}$ is a metric on $\mathcal{M}_{+}^{1}(\mathcal{Z})$,
i.e., $\gamma_{k}\left(P,Q\right)=0$ iff $P=Q$, $\forall P,Q\in\mathcal{M}_{+}^{1}(\mathcal{Z})$.
Conditions under which kernels are characteristic have been studied
by \citet{SriGreFukLanetal08,FukSriGreSch09_short,SriGreFukLanetal10}.
An alternative interpretation of \eqref{eq: MMD} is as an integral
probability metric \citep{Mueller97}: see \citet{Gretton2012} for
details.

In general, distance kernels are continuous but unbounded functions.
Thus, kernel embeddings are not defined for all Borel probability
measures, and one needs to restrict the attention to a class of Borel
probability measures for which $\mathbb{E}_{Z\sim P}k^{1/2}(Z,Z)<\infty$
when discussing the maximum mean discrepancy. We will assume that
all Borel probability measures considered satisfy a stronger condition that $\mathbb{E}_{Z\sim P}k(Z,Z)<\infty$ (this
reflects a finite first moment condition on random variables considered in distance
covariance tests, and will imply that all quantities appearing in our results are well defined). For more details,
see Section~\ref{subsec:restrict} in the Appendix. As an alternative to requiring
this condition, one may assume that
the underlying semimetric space $(\mathcal{Z},\rho)$ of negative
type is itself bounded, i.e., that $\sup_{z,z'\in\mathcal{Z}}\rho(z,z')<\infty$.

We are now able to describe the relation between the maximum mean
discrepancy and the energy distance. The following theorem is a consequence
of Lemma \ref{lem:kernel-from-semimetric}, and is proved in
Section~\ref{subsec:proofs} of the Appendix.
\begin{thm}
\label{thm: 2sample_dkern}Let $(\mathcal{Z},\rho)$ be a semimetric
space of negative type and let $z_{0}\in\mathcal{Z}$. The distance
kernel $k$ induced by $\rho$ satisfies $\gamma_{k}^{2}(P,Q)=\frac{1}{2}D_{E,\rho}(P,Q)$.
In particular, $\gamma_{k}$ does not depend on the choice of $z_{0}$.
\end{thm}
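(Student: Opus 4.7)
The plan is to verify the identity by direct substitution: expand $\gamma_k^2(P,Q)$ using \eqref{eq: MMD}, plug in the defining expression $k(z,z') = \tfrac{1}{2}[\rho(z,z_0)+\rho(z',z_0)-\rho(z,z')]$ for $k \in \mathcal{K}_\rho$, and observe that the terms involving $z_0$ cancel, leaving exactly $\tfrac{1}{2} D_{E,\rho}(P,Q)$.

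Concretely, the three expectations in \eqref{eq: MMD} split by linearity into a $z_0$-piece and a $\rho$-piece. I would compute
\[
\mathbb{E}_{ZZ'} k(Z,Z') = \mathbb{E}_Z \rho(Z,z_0) - \tfrac{1}{2}\mathbb{E}_{ZZ'}\rho(Z,Z'),
\]
and analogously $\mathbb{E}_{WW'} k(W,W') = \mathbb{E}_W \rho(W,z_0) - \tfrac{1}{2}\mathbb{E}_{WW'}\rho(W,W')$, while
\[
\mathbb{E}_{ZW} k(Z,W) = \tfrac{1}{2}\mathbb{E}_Z \rho(Z,z_0) + \tfrac{1}{2}\mathbb{E}_W \rho(W,z_0) - \tfrac{1}{2}\mathbb{E}_{ZW}\rho(Z,W).
\]
Substituting into $\mathbb{E}_{ZZ'} k(Z,Z') + \mathbb{E}_{WW'} k(W,W') - 2\mathbb{E}_{ZW} k(Z,W)$, the four terms $\mathbb{E}_Z \rho(Z,z_0)$ and $\mathbb{E}_W \rho(W,z_0)$ cancel, yielding
\[
\gamma_k^2(P,Q) = \mathbb{E}_{ZW}\rho(Z,W) - \tfrac{1}{2}\mathbb{E}_{ZZ'}\rho(Z,Z') - \tfrac{1}{2}\mathbb{E}_{WW'}\rho(W,W') = \tfrac{1}{2} D_{E,\rho}(P,Q),
\]
which is manifestly independent of $z_0$, giving the final claim for free.

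The one genuine thing to check is that this splitting of expectations is legitimate, since distance kernels are typically unbounded and the individual pieces (such as $\mathbb{E}_Z \rho(Z,z_0)$) must be finite for Fubini/linearity to apply. This is exactly why the excerpt restricts attention to $P,Q$ satisfying $\mathbb{E}_{Z\sim P} k(Z,Z) < \infty$: by Proposition~\ref{pro: properties of Krho}, $k(z,z) = \rho(z,z_0)$ (up to the factor $1/2$ and the fact that $k(z_0,z_0)=0$), so this assumption is equivalent to $\mathbb{E}_Z \rho(Z,z_0) < \infty$, which in turn controls the mixed terms $\mathbb{E}_{ZW}\rho(Z,W)$ etc. via the triangle-like bound coming from Proposition~\ref{pro:semimetric_hilbertian_metric} (writing $\rho = \|\varphi(\cdot) - \varphi(\cdot)\|^2$ and using Cauchy--Schwarz on the Hilbert embedding). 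Once all expectations are finite, the algebraic cancellation above is routine. This integrability check is the only subtle point; the core identity itself is bookkeeping.
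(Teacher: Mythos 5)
Your proposal is correct and follows essentially the same route as the paper's own proof: substitute the distance kernel $k(z,z')=\tfrac{1}{2}\left[\rho(z,z_{0})+\rho(z',z_{0})-\rho(z,z')\right]$ into \eqref{eq: MMD} and cancel the terms depending on a single variable, leaving $\mathbb{E}_{ZW}\rho(Z,W)-\tfrac{1}{2}\mathbb{E}_{ZZ'}\rho(Z,Z')-\tfrac{1}{2}\mathbb{E}_{WW'}\rho(W,W')=\tfrac{1}{2}D_{E,\rho}(P,Q)$. Your added remarks on integrability match the moment conditions the paper imposes (Appendix~\ref{subsec:restrict}), so nothing is missing.
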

There is a subtlety to the link between kernels and semimetrics, when
used in computing the distance on probabilities. Consider again the
family of distance kernels $\mathcal{K}_{\rho}$, where the semimetric
$\rho$ is itself generated from $k$ according to \eqref{eq:rhoFromK}.
As we have seen, it may be that $k\notin\mathcal{K}_{\rho}$, however
it is clear that $\gamma_{k}^{2}(P,Q)=\frac{1}{2}D_{E,\rho}(P,Q)$
whenever $k$ generates $\rho$. Thus, all kernels that generate the
same semimetric $\rho$ on $\mathcal{Z}$ give rise to the same metric
$\gamma_{k}$ on (possibly a subset of) $\mathcal{M}_{+}^{1}(\mathcal{Z})$,
and $\gamma_{k}$ is merely an extension of the metric $\rho^{1/2}$ on
the point masses. The kernel-based and distance-based methods are
therefore equivalent, provided that we allow {}``distances'' $\rho$
which may not satisfy the triangle inequality.

\subsection{\label{sub:dcov_with_kernels}The Hilbert-Schmidt Independence Criterion}

Given a pair of jointly observed random variables $(X,Y)$ with values
in $\mathcal{X}\times\mathcal{Y}$, the Hilbert-Schmidt Independence
Criterion (HSIC) is computed as the maximum mean discrepancy between
the joint distribution $P_{XY}$ and the product of its marginals
$P_{X}P_{Y}$. Let $k_{\mathcal{X}}$ and $k_{\mathcal{Y}}$ be kernels
on $\mathcal{X}$ and $\mathcal{Y}$, with respective RKHSs $\mathcal{H}_{k_{\mathcal{X}}}$
and $\mathcal{H}_{k_{\mathcal{Y}}}$. Following \citet[Section
2.3]{SmoGreSonSch07},
we consider the MMD associated to the kernel $k\left(\left(x,y\right),\left(x',y'\right)\right)=k_{\mathcal{X}}(x,x')k_{\mathcal{Y}}(y,y')$
on $\mathcal{X}\times\mathcal{Y}$ with RKHS $\mathcal{H}_{k}$ isometrically
isomorphic to the tensor product $\mathcal{H}_{k_{\mathcal{X}}}\otimes\mathcal{H}_{k_{\mathcal{Y}}}$.
It follows that $\theta:=\gamma_{k}^{2}(P_{XY},P_{X}P_{Y})$ with
\setlength{\arraycolsep}{0.0em}
\begin{eqnarray*}
 \theta&{}={}&
 \Bigl\Vert\mathbb{E}_{XY}\left[k_{\mathcal{X}}(\cdot,X)\otimes
k_{\mathcal{Y}}(\cdot,Y)\right]\nonumber\\
 &{}{}&
\,\,-\mathbb{E}_{X}k_{\mathcal{X}}(\cdot,X)\otimes\mathbb{E}_{Y}k_{\mathcal{Y}}
(\cdot
,Y)\Bigr\Vert_{\mathcal{H}_{k_{\mathcal{X}}}\otimes\mathcal{H}_{k_{\mathcal{Y}}}
}^{2}\nonumber\\
 &{} = {}&
\mathbb{E}_{XY}\mathbb{E}_{X'Y'}k_{\mathcal{X}}(X,X')k_{\mathcal{Y}}(Y,
Y')\nonumber\\
 &{}{}&\,\,+
\mathbb{E}_{X}\mathbb{E}_{X'}k_{\mathcal{X}}(X,X')\mathbb{E}_{Y}\mathbb{E}_{Y'}
k_{\mathcal{Y}}(Y,Y')\nonumber\\
 &{}{}&
\quad-2\mathbb{E}_{X'Y'}\left[\mathbb{E}_{X}k_{\mathcal{X}}(X,X')\mathbb{E}_{Y}
k_ { \mathcal{Y}}(Y,Y')\right],\nonumber
\end{eqnarray*}
where in the last step we used that $\left\langle f\otimes g,f'\otimes
g'\right\rangle
_{\mathcal{H}_{k_{\mathcal{X}}}\otimes\mathcal{H}_{k_{\mathcal{Y}}}}
=\left\langle f,f'\right\rangle _{\mathcal{H}_{k_{\mathcal{X}}}}\left\langle
g,g'\right\rangle _{\mathcal{H}_{k_{\mathcal{X}}}}$.
It can be shown that this quantity is the squared Hilbert-Schmidt
norm of the covariance operator between RKHSs \citep{GreHerSmoBouetal05}.
 The following theorem demonstrates the link between HSIC and the
distance covariance, and is proved in Appendix~\ref{subsec:proofs}.
\begin{thm}
\label{thm: dcov_kern}Let $(\mathcal{X},\rho_{\mathcal{X}})$ and
$(\mathcal{Y},\rho_{\mathcal{Y}})$ be semimetric spaces of negative
type, and $(x_{0},y_{0})\in\mathcal{X}\times\mathcal{Y}$. Define
\begin{align}
 & k\left(\left(x,y\right),\left(x',y'\right)\right)\nonumber \\
 & :=\left[\rho_{\mathcal{X}}(x,x_{0})+\rho_{\mathcal{X}}(x',x_{0})-\rho_{\mathcal{X}}(x,x')\right]\times\nonumber \\
 & \quad\:\:\left[\rho_{\mathcal{Y}}(y,y_{0})+\rho_{\mathcal{Y}}(y',y_{0})-\rho_{\mathcal{Y}}(y,y')\right].\label{eq: tensor_kernel}
\end{align}
Then, $k$ is a positive definite kernel on $\mathcal{X}\times\mathcal{Y}$,
and \textup{$\gamma_{k}^{2}(P_{XY},P_{X}P_{Y})=\mathcal{V}_{\rho_{\mathcal{X},}\rho_{\mathcal{Y}}}^{2}(X,Y)$.}
\end{thm}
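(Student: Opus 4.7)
My plan is to recognize $k$ as a tensor product of two (rescaled) distance-induced kernels, invoke the HSIC identity displayed immediately above the theorem, and then reduce the resulting expression to $\mathcal{V}^2_{\rho_{\mathcal{X}},\rho_{\mathcal{Y}}}(X,Y)$ by exploiting an invariance of that identity under symmetric additive shifts of each factor.

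For positive definiteness, I would first observe that $\tilde{k}_{\mathcal{X}}(x,x'):=\rho_{\mathcal{X}}(x,x_{0})+\rho_{\mathcal{X}}(x',x_{0})-\rho_{\mathcal{X}}(x,x')$ is twice an element of $\mathcal{K}_{\rho_{\mathcal{X}}}$, hence positive definite by Lemma~\ref{lem:kernel-from-semimetric}, and symmetrically for $\tilde{k}_{\mathcal{Y}}$. Since $k((x,y),(x',y'))=\tilde{k}_{\mathcal{X}}(x,x')\tilde{k}_{\mathcal{Y}}(y,y')$, the Schur product theorem yields the first assertion. Because $k$ has this tensor-product structure, the HSIC computation carried out just above the theorem applies verbatim and expresses $\gamma_k^{2}(P_{XY},P_{X}P_{Y})$ as a combination $T_{1}+T_{2}-2T_{3}$ of three expectations involving $\tilde{k}_{\mathcal{X}}$ and $\tilde{k}_{\mathcal{Y}}$.

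The next step is the algebraic identity $\tilde{k}_{\mathcal{X}}(x,x')=-\rho_{\mathcal{X}}(x,x')+\alpha(x)+\alpha(x')$ with $\alpha(x):=\rho_{\mathcal{X}}(x,x_{0})$ (and the analogue $\beta(y):=\rho_{\mathcal{Y}}(y,y_{0})$ for $\tilde{k}_{\mathcal{Y}}$). I would then show that the combination $T_{1}+T_{2}-2T_{3}$ is invariant under the replacement $\tilde{k}_{\mathcal{X}}(x,x')\mapsto\tilde{k}_{\mathcal{X}}(x,x')+g(x)+g(x')$ for any measurable $g:\mathcal{X}\to\mathbb{R}$ with sufficient integrability, and symmetrically on the $\mathcal{Y}$ side. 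Applying this invariance twice strips the $\alpha$- and $\beta$-summands from both factors; the two sign changes cancel, so $\tilde{k}_{\mathcal{X}}\tilde{k}_{\mathcal{Y}}$ may be replaced by $\rho_{\mathcal{X}}\rho_{\mathcal{Y}}$ throughout, and the three terms collapse to precisely the three terms of \eqref{eq: dCov_in_terms_of_distances}, giving $\mathcal{V}_{\rho_{\mathcal{X}},\rho_{\mathcal{Y}}}^{2}(X,Y)$.

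The hard part will be the invariance verification itself. Adding $g(x)+g(x')$ produces nonfactorizing cross terms of the form $\mathbb{E}_{XY}[g(X)h(Y)]$, where $h(Y):=\mathbb{E}_{Y'}\tilde{k}_{\mathcal{Y}}(Y,Y')$, precisely because $X$ and $Y$ need not be independent under $P_{XY}$; these must be shown to appear with opposite signs in $T_{1}$ and in $-2T_{3}$ (after using the $(X,Y)\leftrightarrow(X',Y')$ symmetry) and cancel, while the fully separable pieces $\mathbb{E}_{X}g(X)\cdot\mathbb{E}_{YY'}\tilde{k}_{\mathcal{Y}}(Y,Y')$ cancel between $T_{2}$ and $-2T_{3}$. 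The only other technicality is finiteness of all the bilinear integrals, which is handled by the moment hypothesis $\mathbb{E}_{Z}k(Z,Z)<\infty$ stated just before Theorem~\ref{thm: 2sample_dkern}.
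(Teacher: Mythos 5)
Your proposal is correct, and its skeleton matches the paper's: both establish positive definiteness by recognizing $k$ as the product of the two distance kernels $\tilde{k}_{\mathcal{X}}$, $\tilde{k}_{\mathcal{Y}}$ (your appeal to the Schur product theorem and the paper's citation of the product-kernel lemma are interchangeable here), and both then start from the three-term HSIC expansion $\theta_1+\theta_2-2\theta_3$ displayed above the theorem. Where you genuinely diverge is in the reduction step. The paper substitutes $\tilde{k}_{\mathcal{X}}=\rho_{\mathcal{X}}(\cdot,x_0)+\rho_{\mathcal{X}}(\cdot,x_0)-\rho_{\mathcal{X}}$ and $\tilde{k}_{\mathcal{Y}}$ directly into each of $\theta_1,\theta_2,\theta_3$, writes out all the resulting $\rho$-expectations, and cancels them term by term, observing that only the leading products $\rho_{\mathcal{X}}\rho_{\mathcal{Y}}$ survive. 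You instead prove once that the functional $T_1+T_2-2T_3$ is invariant under $\tilde{k}_{\mathcal{X}}(x,x')\mapsto\tilde{k}_{\mathcal{X}}(x,x')+g(x)+g(x')$ (and symmetrically in $\mathcal{Y}$), and your accounting of the cancellation is exactly right: with $h(y):=\mathbb{E}_{Y'}\tilde{k}_{\mathcal{Y}}(y,Y')$ the perturbation contributes $2\mathbb{E}_{XY}[g(X)h(Y)]$ to $T_1$, $2\mathbb{E}_X g\,\mathbb{E}_Y h$ to $T_2$, and $\mathbb{E}_X g\,\mathbb{E}_Y h+\mathbb{E}_{XY}[g(X)h(Y)]$ to $T_3$, so the total change vanishes; applying this with $g=-\rho_{\mathcal{X}}(\cdot,x_0)$ and $g=-\rho_{\mathcal{Y}}(\cdot,y_0)$ replaces the two factors by $-\rho_{\mathcal{X}}$ and $-\rho_{\mathcal{Y}}$, whose product recovers \eqref{eq: dCov_in_terms_of_distances}. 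This buys a shorter and more structural argument than the paper's expansion: it makes transparent that the result (and $\gamma_k$ itself) does not depend on the choice of $(x_0,y_0)$, and the invariance lemma is reusable, being essentially the statement that HSIC depends only on the centered versions of the marginal kernels. The paper's version, in exchange, is entirely explicit and requires no auxiliary lemma. Your handling of integrability via the standing assumption $\mathbb{E}_Z k(Z,Z)<\infty$ matches the paper's convention, so no gap there.
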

We remark that a similar result to Theorem \ref{thm: dcov_kern} is
given by \citet[Proposition 3.16]{Lyons2011}, but without making
use of the RKHS equivalence. Theorem \ref{thm: dcov_kern} is a more
general statement, in the sense that we allow $\rho$ to be a semimetric
of negative type, rather than a metric. In addition to yielding a more
general statement,  the RKHS equivalence leads to a significantly
simpler proof: the result is an immediate application of the HSIC
expansion of \citet{SmoGreSonSch07}.

\section{\label{sec:Distinguishing-probability-distributions}Distinguishing
Probability Distributions}

\citet[Theorem 3.20]{Lyons2011} shows that distance covariance in
a metric space characterizes independence if the metrics satisfy an
additional property, termed \emph{strong negative type}. We will extend
this notion to a semimetric $\rho$. We will say that $P\in\mathcal{M}_{+}^{1}(\mathcal{Z})$
has a finite first moment w.r.t.~$\rho$ if $\int\rho(z,z_{0})dP$
is finite for some $z_{0}\in\mathcal{Z}$. It is easy to see that the integral $\int\rho\,\,
d\left(\left[P-Q\right]\times\left[P-Q\right]\right)=-D_{E,\rho}(P,Q)$ converges whenever $P$ and $Q$ have finite first moments w.r.t. $\rho$. In Appendix~\ref{subsec:restrict}, we
show that this condition is equivalent to $\mathbb{E}_{Z\sim
P}k(Z,Z)<\infty$,
for a kernel $k$ that generates $\rho$, which implies  the kernel
embedding $\mu_{k}(P)$ is also well defined.
\begin{defn}
The semimetric space $(\mathcal{Z},\rho)$ is said to have \emph{a
strong negative type} if $\forall P,Q\in\mathcal{M}_{+}^{1}(\mathcal{Z})$
with finite first moment w.r.t.~$\rho$, {\small 
\begin{equation}
P\neq Q\Rightarrow\int\rho\,\,
d\left(\left[P-Q\right]\times\left[P-Q\right]\right)<0.\label{eq: CISND}
\end{equation}
}{\small \par}
\end{defn}
The quantity in \eqref{eq: CISND} is exactly $-2\gamma_{k}^{2}(P,Q)$
for all $P,Q$ with finite first moment w.r.t.~$\rho$. We directly
obtain:
\begin{prop}
\label{pro:strong_negative_char}Let kernel $k$ generate $\rho$.
Then $(\mathcal{Z},\rho)$ has a strong negative type if and only
if $k$ is characteristic to all probability measures with finite
first moment w.r.t.~\textup{ $\rho$}.
\end{prop}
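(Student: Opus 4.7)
The plan is to prove this as an essentially immediate corollary of Theorem \ref{thm: 2sample_dkern} and the identification of the left-hand side of \eqref{eq: CISND} with an MMD. Specifically, I would first establish and then invoke the key identity
\[
\int\rho\, d\bigl([P-Q]\times[P-Q]\bigr)=-D_{E,\rho}(P,Q)=-2\gamma_{k}^{2}(P,Q),
\]
valid for all $P,Q\in\mathcal{M}_{+}^{1}(\mathcal{Z})$ with finite first moment w.r.t.~$\rho$. The first equality is just the definition of $D_{E,\rho}$ (after expanding $[P-Q]\times[P-Q]$ into the four pairwise products $P\times P$, $Q\times Q$, $P\times Q$, $Q\times P$ and using symmetry of $\rho$). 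The second equality is Theorem \ref{thm: 2sample_dkern}. Under the finite first moment condition w.r.t.~$\rho$, the appendix-established equivalence with $\mathbb{E}_{Z\sim P}k(Z,Z)<\infty$ guarantees that $\mu_{k}(P)$ and $\mu_{k}(Q)$ are well-defined elements of $\mathcal{H}_{k}$, so $\gamma_{k}^{2}(P,Q)=\|\mu_{k}(P)-\mu_{k}(Q)\|_{\mathcal{H}_{k}}^{2}$ makes sense and all expectations in \eqref{eq: MMD} converge absolutely.

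For the forward direction, assume $(\mathcal{Z},\rho)$ has strong negative type, and suppose $P,Q$ have finite first moment w.r.t.~$\rho$ with $\mu_{k}(P)=\mu_{k}(Q)$. Then $\gamma_{k}^{2}(P,Q)=0$, so by the identity $\int\rho\, d([P-Q]\times[P-Q])=0$. The contrapositive of \eqref{eq: CISND} forces $P=Q$, showing that the restriction of $\mu_{k}$ to probability measures with finite first moment w.r.t.~$\rho$ is injective, i.e.~$k$ is characteristic to this class.

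For the reverse direction, assume $k$ is characteristic to the class of probability measures with finite first moment w.r.t.~$\rho$, and let $P\neq Q$ be two such measures. Then $\mu_{k}(P)\neq\mu_{k}(Q)$, hence $\gamma_{k}^{2}(P,Q)>0$, and by the identity $\int\rho\, d([P-Q]\times[P-Q])=-2\gamma_{k}^{2}(P,Q)<0$, giving strong negative type.

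There is essentially no hard step: the proof is purely bookkeeping around the identity. The only subtlety I would be careful about is the domain issue, namely that Theorem \ref{thm: 2sample_dkern} and the definition of $\gamma_{k}$ must be applicable to every $P,Q$ in the class under consideration. This is precisely what the ``finite first moment w.r.t.~$\rho$'' hypothesis, together with the equivalence with $\mathbb{E}_{Z\sim P}k(Z,Z)<\infty$ proved in Appendix~\ref{subsec:restrict}, is designed to ensure; so I would cite that equivalence explicitly at the start of the proof and then the two implications become one-line arguments.
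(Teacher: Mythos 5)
Your proof is correct and follows essentially the same route as the paper, which treats the proposition as immediate from the identity $\int\rho\,d\bigl([P-Q]\times[P-Q]\bigr)=-2\gamma_{k}^{2}(P,Q)$ on the class of measures with finite first moment w.r.t.~$\rho$, with well-definedness supplied by the appendix equivalence with $\mathbb{E}_{Z\sim P}k(Z,Z)<\infty$. The only cosmetic remark is that Theorem~\ref{thm: 2sample_dkern} is stated for the distance kernel induced by $\rho$, whereas here $k$ is any kernel generating $\rho$; the paper covers this by noting that all kernels generating the same $\rho$ yield the same $\gamma_{k}$, which your argument implicitly uses and could cite in one line.
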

Thus, the problems of checking whether a semimetric is of strong negative
type and whether its associated kernel is characteristic to an appropriate
space of Borel probability measures are equivalent. This conclusion
has some overlap with \citet{Lyons2011}: in particular, Proposition
\ref{pro:strong_negative_char} is stated in \citet[Proposition 3.10]{Lyons2011},
where the barycenter map $\beta$ is a kernel embedding in our terminology,
although Lyons does not consider distribution embeddings in an RKHS.

\section{\label{sec:Consistency}Empirical Estimates and Hypothesis Tests}

In the case of two-sample testing, we are given i.i.d. samples
$\mathbf{z}=\left\{ z_{i}\right\} _{i=1}^{m}\sim P$
and $\mathbf{w}=\left\{ w_{i}\right\} _{i=1}^{n}\sim Q$. The empirical
(biased) V-statistic estimate of \eqref{eq: MMD} is
\setlength{\arraycolsep}{0.0em}
\begin{eqnarray}
\hat{\gamma}_{k,V}^{2}(\mathbf{z},\mathbf{w}) &{} ={} &
\frac{1}{m^{2}}\sum_{i=1}^{m}\sum_{j=1}^{m}k(z_{i},z_{j})+\frac{1}{n^{2}}\sum_{
i=1}^{n}\sum_{j=1}^{n}k(w_{i},w_{j})\nonumber \\
 &{}{}&\qquad -\frac{2}{mn}\sum_{i=1}^{m}\sum_{j=1}^{n}k(z_{i},w_{j}).
\label{eq: empirical_mmd}
\end{eqnarray}
Recall that if we use a distance kernel $k$ induced by a semimetric
$\rho$, this estimate involves only the pairwise $\rho$-distances
between the sample points.

In the case of independence testing, we are given i.i.d. samples $\mathbf{z}=\left\{ (x_{i},y_{i})\right\} _{i=1}^{m}\sim P_{XY}$,
and the resulting V-statistic estimate (HSIC) is \citep{GreBouSmoSch05,GreFukTeoSonSchSmo08_short}
\begin{equation}
HSIC(\mathbf{z};k_{\mathcal{X}},k_{\mathcal{Y}})=\frac{1}{m^{2}}Tr(K_{\mathcal{X}}HK_{\mathcal{Y}}H),\label{eq: empirical_HSIC}
\end{equation}
where $K_{\mathcal{X}}$, $K_{\mathcal{Y}}$ and $H$ are $m\times m$
matrices given by $\left(K_{\mathcal{X}}\right)_{ij}:=k_{\mathcal{X}}(x_{i},x_{j})$,
$\left(K_{\mathcal{Y}}\right)_{ij}:=k_{\mathcal{Y}}(y_{i},y_{j})$
and $H_{ij}=\delta_{ij}-\frac{1}{m}$ (centering matrix). As in the
two-sample case, if both $k_{\mathcal{X}}$ and $k_{\mathcal{Y}}$
are distance kernels, the test statistic involves only the pairwise
distances between the samples, i.e., kernel matrices in \eqref{eq: empirical_HSIC}
may be replaced by distance matrices. 

We would like to design distance-based tests with an asymptotic Type
I error of $\alpha$, and thus we require an estimate of the $\left(1-\alpha\right)$-quantile
of the V-statistic distribution under the null hypothesis. Under the
null hypothesis, both \eqref{eq: empirical_mmd} and \eqref{eq: empirical_HSIC}
converge to a particular weighted sum of chi-squared distributed independent
random variables (for more details, see Section~\ref{subsec:tests}). We
investigate two approaches, both
of which yield consistent tests: a bootstrap approach \citep{ArcGin92},
and a spectral approach \citep{GreFukHarSri09_short,Zhang2011}. The
latter requires empirical computation of the spectrum of kernel integral
operators, a problem studied extensively in the context of kernel
PCA \citep{SchSmoMul97}. In the two-sample case, one computes the
eigenvalues of the centred Gram matrix $\tilde{K}=HKH$ on the aggregated
samples. Here, $K$ is a $2m\times2m$ matrix, with entries $K_{ij}=k(u_{i},u_{j})$,
$\mathbf{u}=[\mathbf{z}\;\mathbf{w}]$ is the concatenation of the
two samples and $H$ is the centering matrix. \citet{GreFukHarSri09_short}
show that the null distribution defined using these finite sample estimates
converges to the population distribution, provided that the spectrum is square-root summable. The
same approach can be used for a consistent finite sample
null distribution of HSIC, via computation of the eigenvalues
of $\tilde{K}_{\mathcal{X}}=HK_{\mathcal{X}}H$ and $\tilde{K}_{\mathcal{Y}}=HK_{\mathcal{Y}}H$
\citep{Zhang2011}.

Both \citet[p.~14]{Szekely2004} and \citet[p.~2782--2783]{Szekely2007}
establish that the energy distance and distance covariance statistics,
respectively, converge to a particular weighted sum of chi-squares
of form similar to that found for the kernel-based statistics. Analogous
results for the generalized distance covariance are presented by
\citet[p.~7--8]{Lyons2011}.
These works do not propose test designs that attempt to estimate
the coefficients in such representations of the null distribution, however (note also
that these coefficients have a more intuitive interpretation using
kernels). Besides the bootstrap, \citet[Theorem 6]{Szekely2007} also
proposes an independence test using a bound applicable to a general
quadratic form $Q$ of centered Gaussian random variables with $\mathbb{E}[Q]=1$:
$\mathbb{P}\left\{ Q\geq\left(\Phi^{-1}(1-\alpha/2)^{2}\right)\right\} \leq\alpha$,
valid for $0<\alpha\leq0.215$. When applied to the distance covariance
statistic, the upper bound of $\alpha$ is achieved if $X$ and $Y$
are independent Bernoulli variables. The authors remark that the resulting
criterion might be over-conservative. Thus, more sensitive tests are
possible by computing the spectrum of the centred Gram matrices associated
to distance kernels, and we pursue this approach in the next section.

\section{\label{sec:Experiments}Experiments}

\subsection{Two-sample Experiments}

In the two-sample experiments, we investigate three different kinds
of synthetic data. In the first, we compare two multivariate Gaussians,
where the means differ in one dimension only, and all variances are
equal. In the second, we again compare two multivariate Gaussians,
but this time with identical means in all dimensions, and variance
that differs in a single dimension. In our third experiment, we use
the benchmark data of \citet{SriFukGreLanetal09_short}: one distribution
is a univariate Gaussian, and the second is a univariate Gaussian
with a sinusoidal perturbation of increasing frequency (where higher
frequencies correspond to harder problems). All tests use a distance
kernel induced by the Euclidean distance. As shown on the left plots
in Figure~\ref{fig:Gaussian-vs-Dist},  the spectral and
bootstrap test designs appear indistinguishable, and they significantly
outperform the test designed using the quadratic form bound, which
appears to be far too conservative for the data sets considered. This
is confirmed by checking the Type I error of the quadratic form test,
which is significantly smaller than the test size of $\alpha=0.05$.

We also compare the performance to that of the Gaussian kernel, with
the bandwidth set to the median distance between points in the aggregation
of samples. We see that when the means differ, both tests perform
similarly. When the variances differ, it is clear that the Gaussian
kernel has a major advantage over the distance kernel, although this
advantage decreases with increasing dimension (where both perform
poorly). In the case of a sinusoidal perturbation, the performance
is again very similar.

\begin{figure}[t]
\begin{centering}
\begin{tabular}{cc}
\includegraphics[bb=88bp 255bp 485bp
590bp,clip,width=0.232\textwidth]{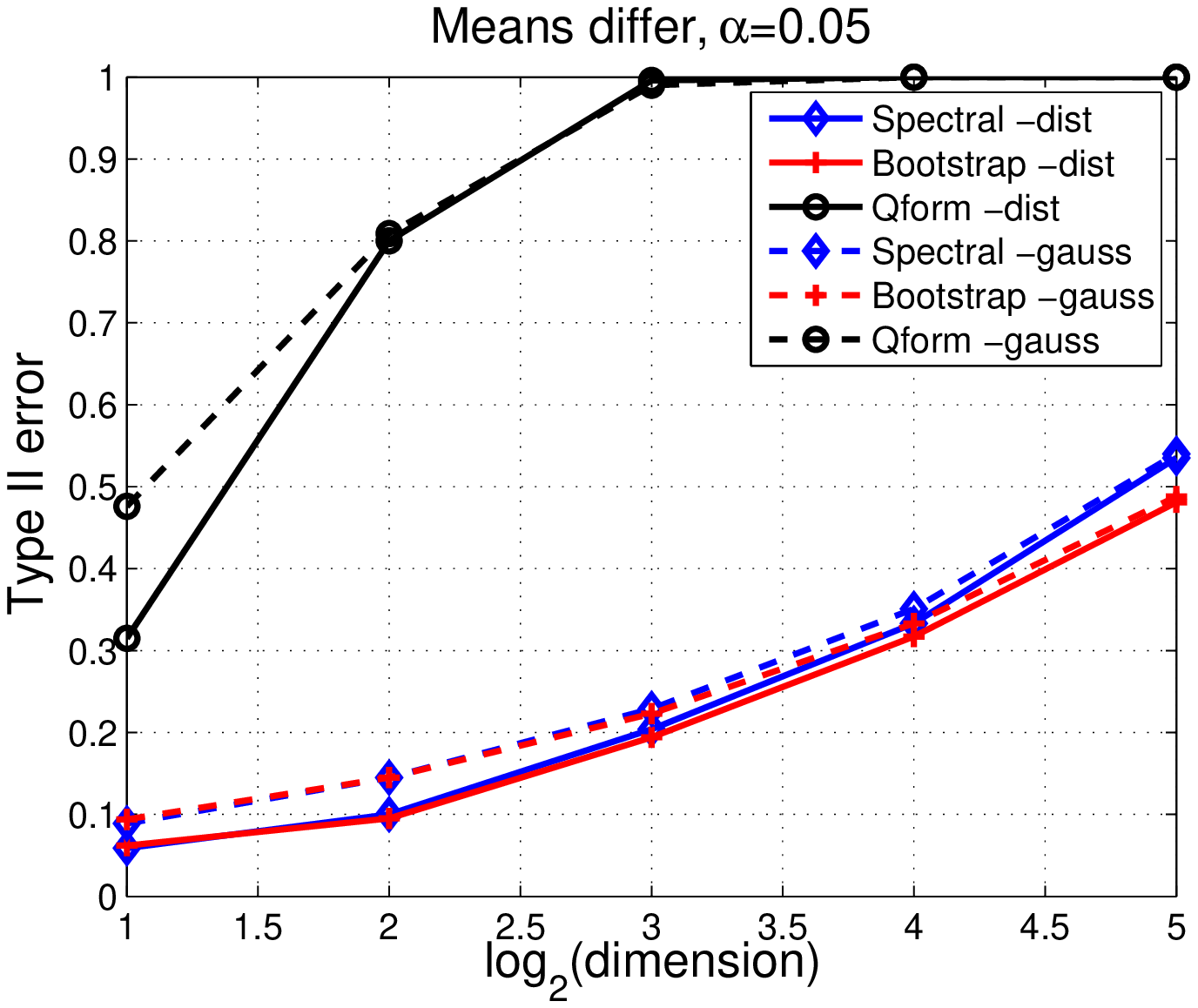}
\includegraphics[bb=88bp 255bp 485bp
590bp,clip,width=0.232\textwidth]{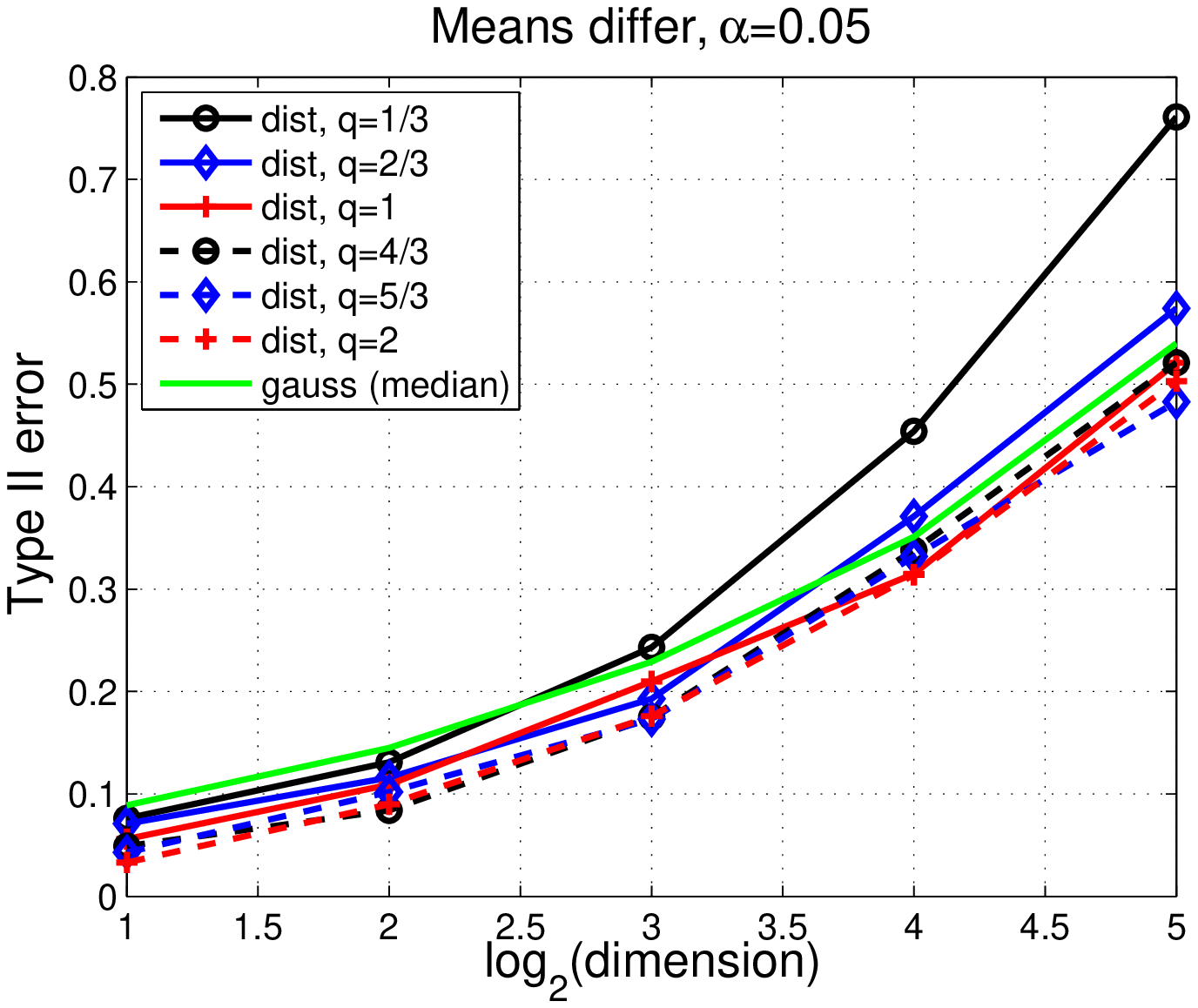}
\end{tabular}
\end{centering}

\begin{centering}
\begin{tabular}{cc}
\includegraphics[bb=88bp 255bp 485bp 590bp,
clip,width=0.232\textwidth]{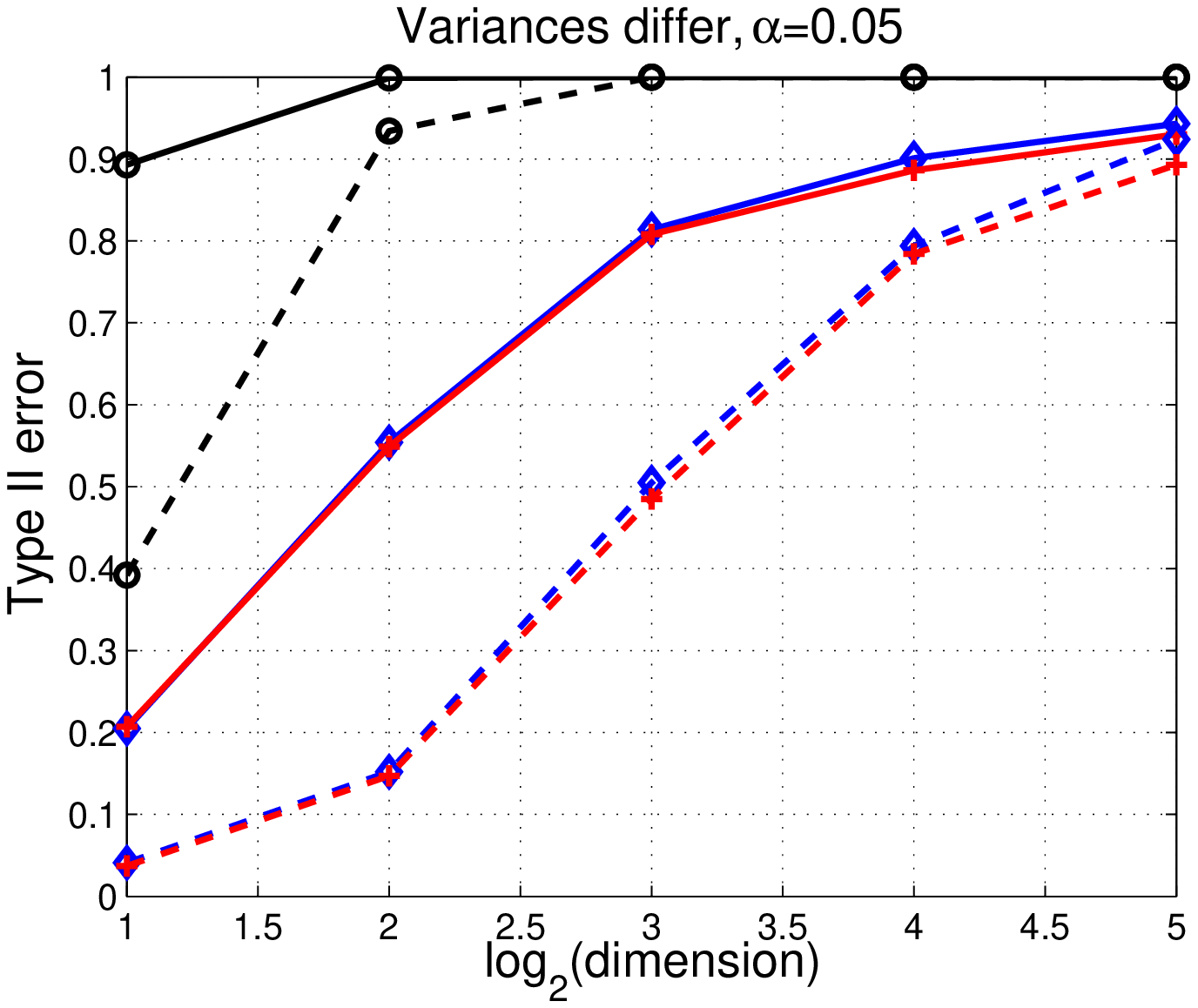}
\includegraphics[bb=88bp 255bp 485bp
590bp,clip,width=0.232\textwidth]{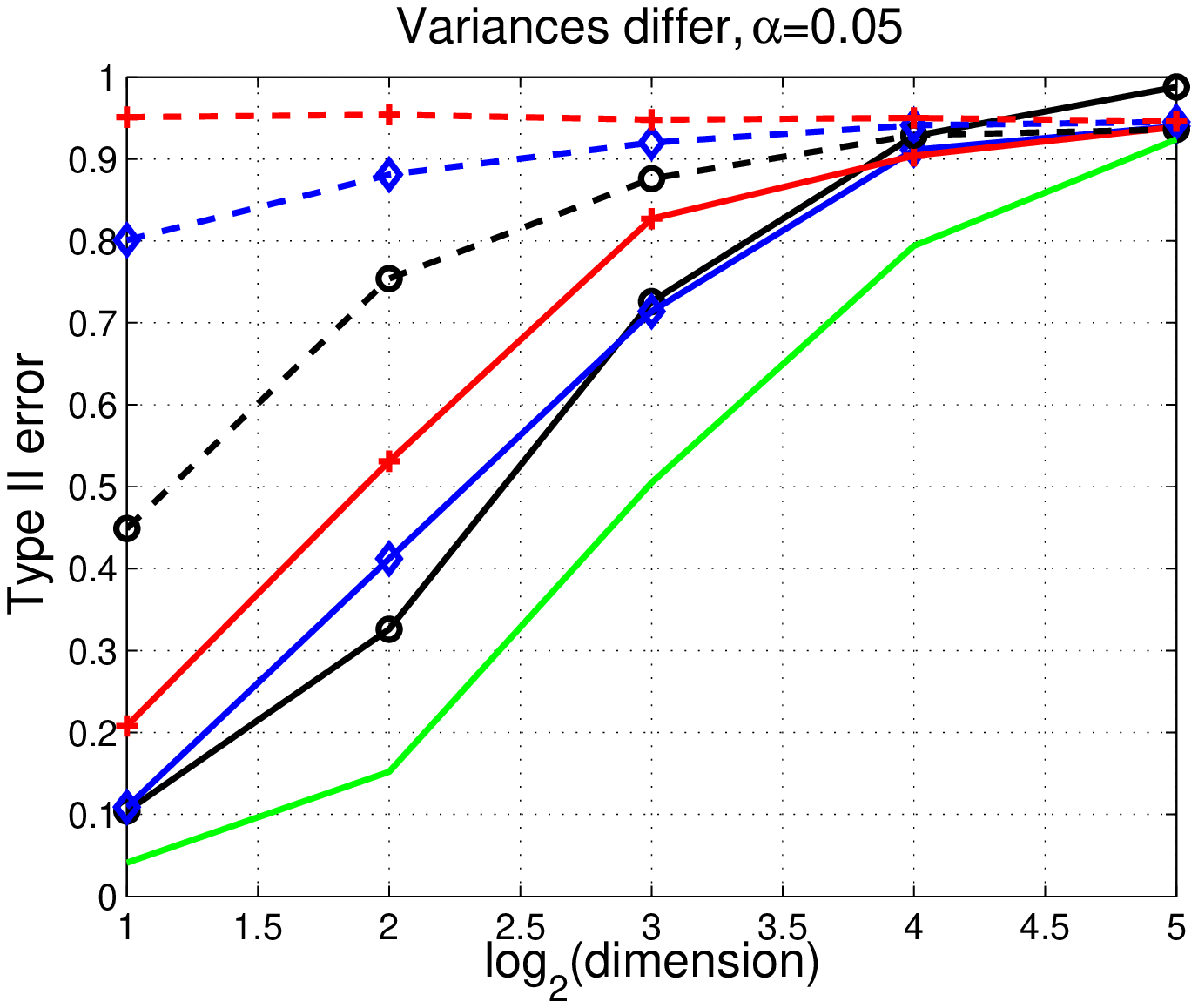}
\end{tabular}
\end{centering}

\begin{centering}
\begin{tabular}{cc}
\includegraphics[bb=88bp 255bp 485bp 590bp,
clip,width=0.232\textwidth]{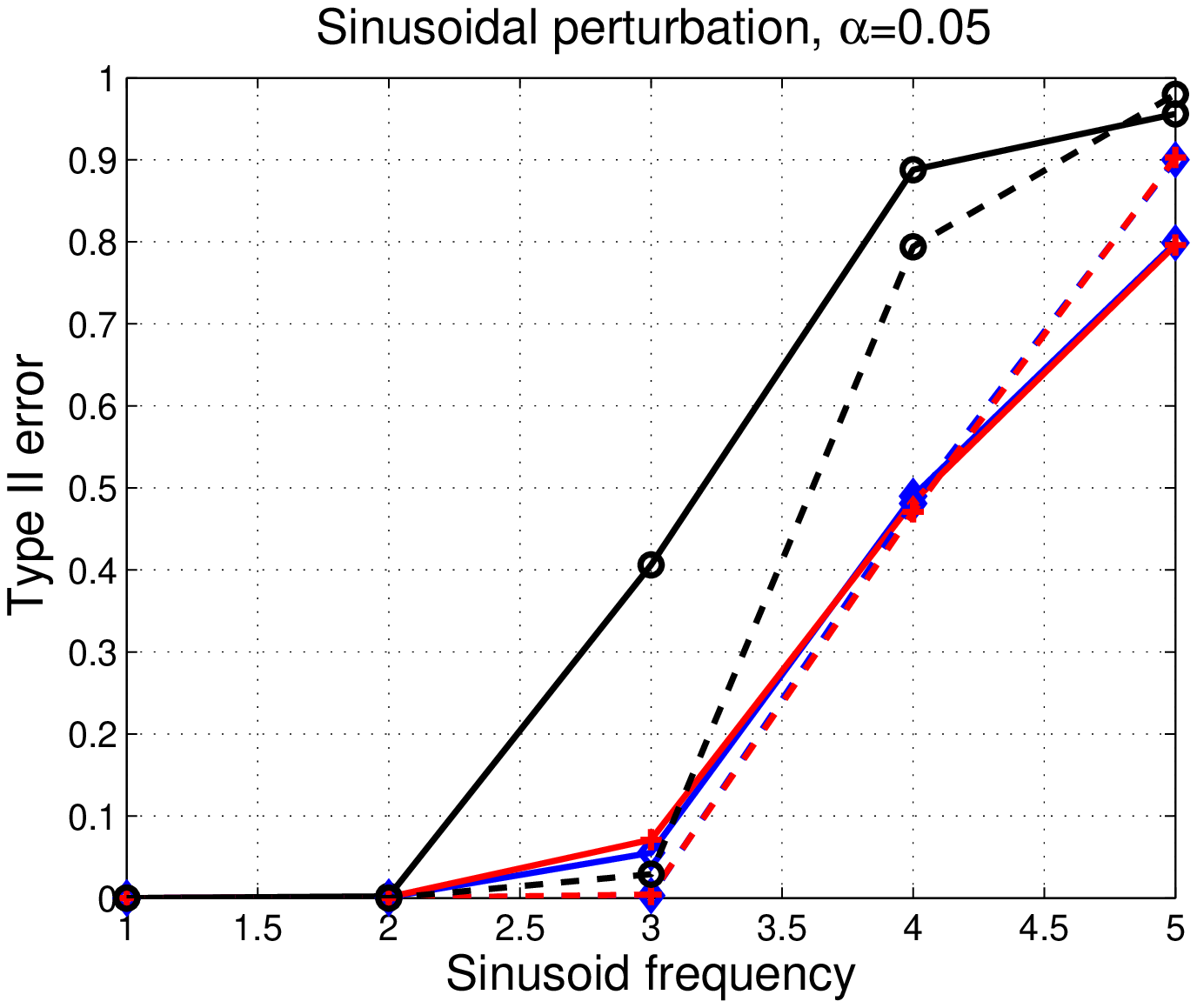}
\includegraphics[bb=88bp 255bp 485bp
590bp,clip,width=0.232\textwidth]{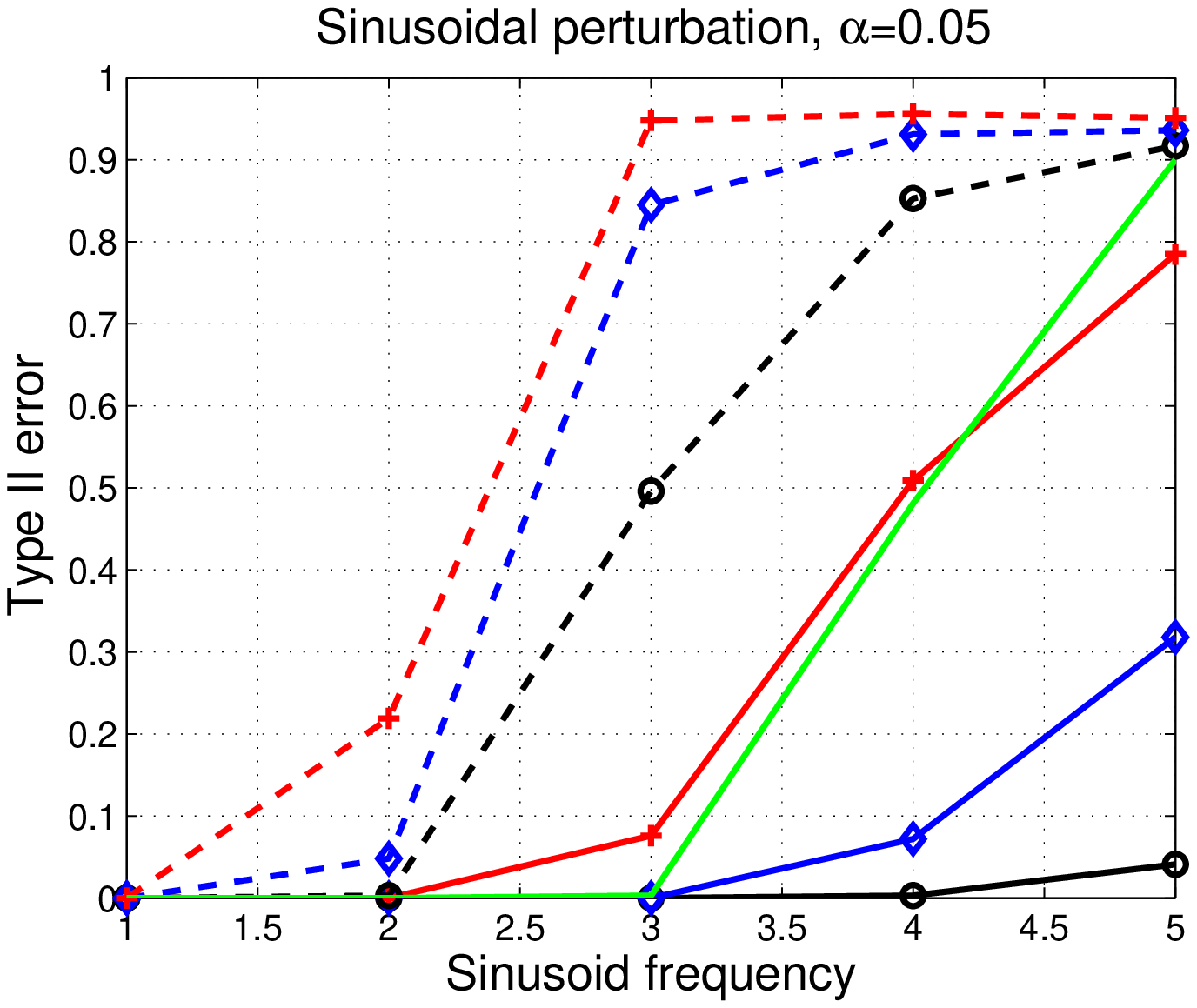}
\end{tabular}
\vspace{-2mm}
\end{centering}
\caption{\label{fig:Gaussian-vs-Dist}(left) MMD using Gaussian and distance
kernels for various tests; (right) Spectral MMD using distance kernels
with various exponents. The number of samples in all experiments was
set to $m=200$.}\vspace{-6mm}
\end{figure}
In addition, following Example~\ref{exa: various_exponents}, we investigate the
performance of kernels obtained using the semimetric $\rho(z,z')=\left\Vert
z-z'\right\Vert ^{q}$
for $0<q\leq2$. Results are presented in the right hand plots
of Figure~\ref{fig:Gaussian-vs-Dist}. While  judiciously
chosen values of $q$ offer some improvement in the cases of differing
mean and variance, we see a dramatic improvement 
for the sinusoidal perturbation,
compared with the case $q=1$
and the Gaussian kernel:
 values $q=1/3$ (and smaller) yield virtually error-free
performance even at high frequencies (note that $q=1$ corresponds to the
energy distance described in \citet{Szekely2004,Szekely2005}). 
Additional experiments with real-world data are presented in
Appendix~\ref{subsec:further}. 

We observe from the simulation results that distance kernels with
higher exponents are advantageous in cases where distributions differ
in mean value along a single dimension (with noise in the remainder),
whereas distance kernels with smaller exponents are more sensitive
to differences in distributions at finer lengthscales (i.e., where
the characteristic functions of the distributions differ at higher
frequencies). This observation also appears to hold true on the real-world
data experiments in Appendix~\ref{subsec:further}.

\subsection{Independence Experiments}

\begin{figure}[t]
\begin{centering}
\begin{tabular}{cc}
\includegraphics[bb=88bp 255bp 485bp 590bp,
clip,width=0.232\textwidth]{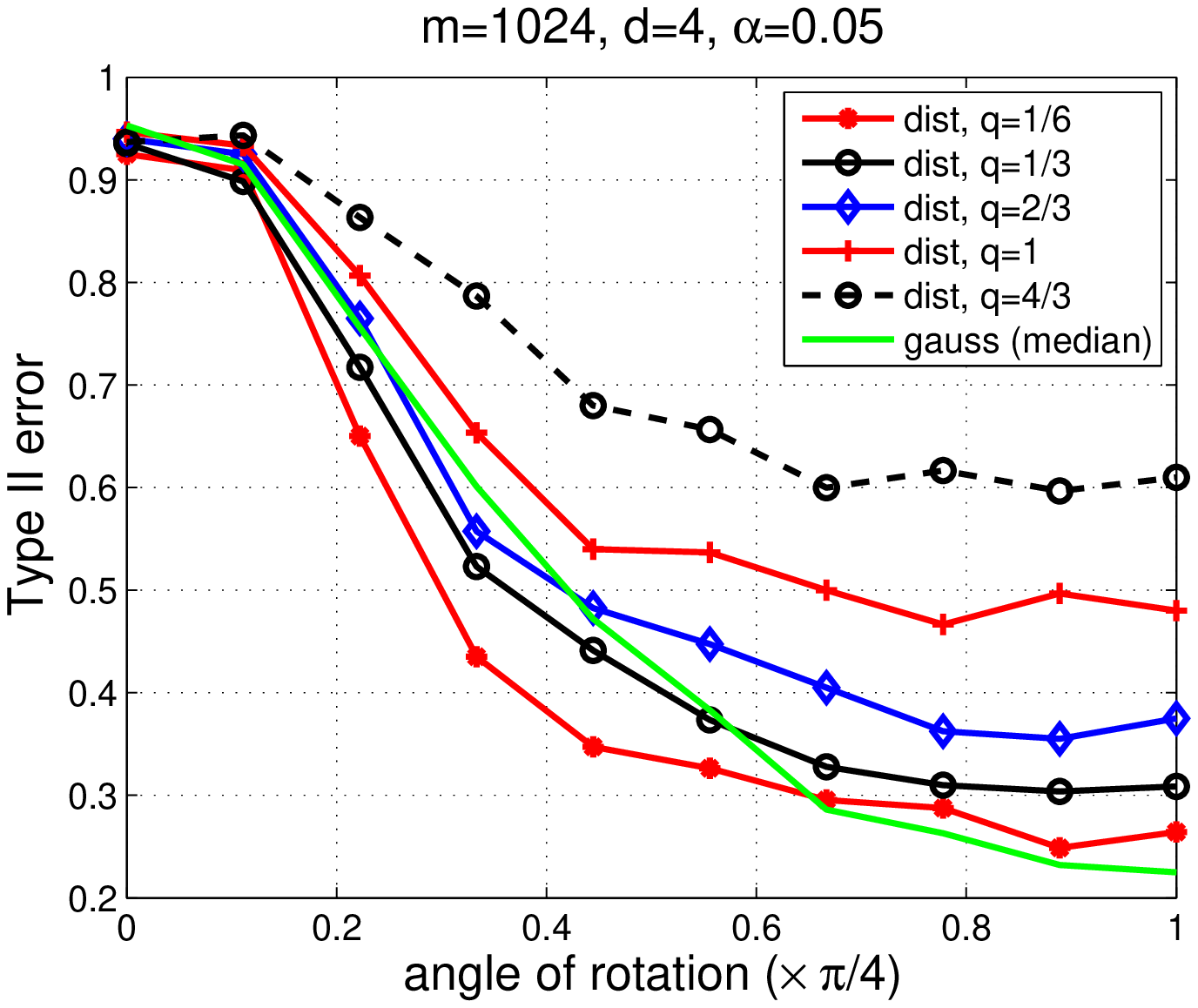}
\includegraphics[bb=88bp 255bp 485bp
590bp,clip,width=0.232\textwidth]{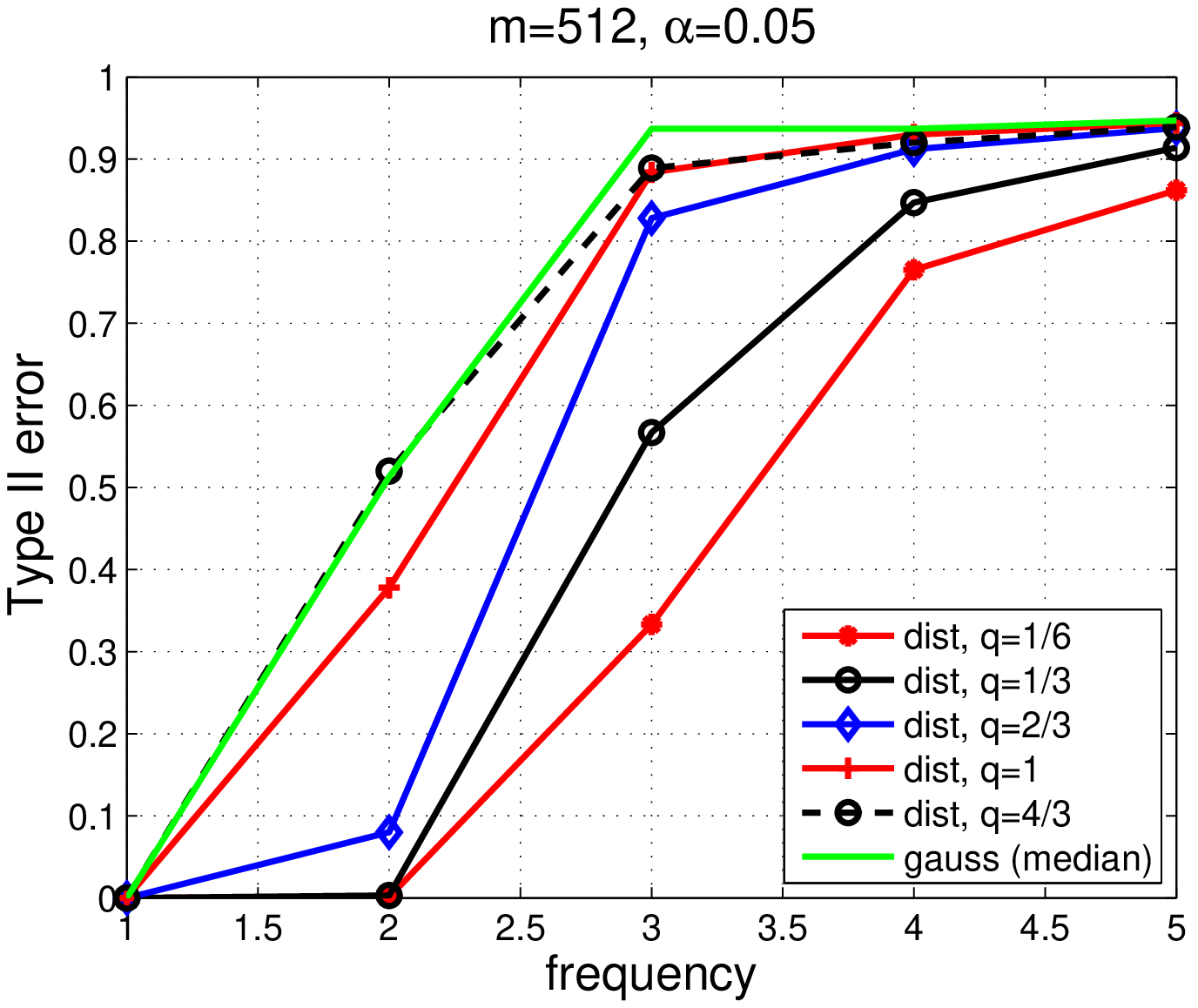}
\end{tabular}\vspace{-2mm}
\end{centering}
\caption{\label{fig:qdist_HSIC_ALL}HSIC using distance kernels with various
exponents and a Gaussian kernel as a function of (left) the angle
of rotation for the dependence induced by rotation; (right) frequency
$\ell$ in the sinusoidal dependence example.}\vspace{-6mm}
\end{figure}
To assess independence tests, we used an artificial benchmark proposed
by \citet{GreFukTeoSonSchSmo08_short}: we generate univariate random
variables from the ICA benchmark densities of \citet{BacJor02}; rotate
them in the product space by an angle between $0$ and $\pi/4$ to
introduce dependence; fill additional dimensions with independent
Gaussian noise; and, finally, pass the resulting multivariate data
through random and independent orthogonal transformations. The resulting
random variables $X$ and $Y$ are dependent but uncorrelated. The
case $m=1024$ (sample size) and $d=4$ (dimension) is plotted in
Figure~\ref{fig:qdist_HSIC_ALL} (left). As observed by \citet{GreFukSri09},
the Gaussian kernel does better than the distance kernel with $q=1$.
By varying $q$,  however, we are able to obtain a wide range of performance;
in particular, the values $q=1/6$ (and smaller) have an advantage
over the Gaussian kernel on this dataset, especially in the case
of  smaller angles of rotation. As for the two-sample case, bootstrap
and spectral tests have indistinguishable performance, and are significantly
more sensitive than the quadratic form based test, which failed to
reject the null hypothesis of independence on this dataset. 

In addition, we assess the test performance on sinusoidally dependent data.
The distribution over the random variable pair $X,Y$ was drawn from
$P_{XY}\propto1+\sin(\ell x)\sin(\ell y)$ for integer $\ell$, on
the support $\mathcal{X}\times\mathcal{Y}$, where $\mathcal{X}:=[-\pi,\pi]$
and $\mathcal{Y}:=[-\pi,\pi]$. In this way, increasing $\ell$ caused
the departure from a uniform (independent) distribution to occur at
increasing frequencies, making this departure harder to detect from
a small sample size. Results are in Figure~\ref{fig:qdist_HSIC_ALL}
(right). We note that the distance covariance outperforms the Gaussian
kernel on this example, and that smaller exponents result in better
performance (lower Type II error when the departure from independence
occurs at higher frequencies). Finally, we note that the setting $q=1$,
which is described in \citet{Szekely2007,SzeRiz09}, is a reasonable
heuristic in practice, but does not yield the most powerful tests
on either dataset.

\section{Conclusion}

We have established an equivalence between the energy distance and
distance covariance, and RKHS measures of distance between distributions.
In particular, energy distances and RKHS distance measures coincide
when the kernel is induced by a semimetric of negative type. The associated
family of kernels performs well in two-sample and independence testing:
interestingly, the parameter choice most commonly used in the statistics
literature does not yield the most powerful tests in many settings.

The interpretation of the energy distance and distance covariance
in an RKHS setting should be of considerable interest both to statisticians
and machine learning researchers, since the associated kernels may
be used much more widely: in conditional dependence testing and estimates
of the chi-squared distance \citep{FukGreSunSch08_short}, in Bayesian
inference \citep{FukSonGre11_Bayes}, in mixture density estimation
\citep{Sriperumbudur11} and
in other machine learning
applications. In particular, the link with kernels makes these applications
of the energy distance immediate and straightforward. Finally, for
problem settings  defined most naturally in terms of distances,
and where these distances are of negative type, there is an interpretation
in terms of reproducing kernels, and the learning machinery from the
kernel literature can be brought to bear.

\footnotesize
\bibliographystyle{icml2012_short}
\bibliography{example_paper,local,bibfile}
\normalsize
\appendix
\normalsize

\section{\label{sec:Homogeneity-Testing-in-1}Appendix}

\subsection{Proofs}\label{subsec:proofs}
\begin{proof}
(\textbf{Proposition} \ref{pro: properties of Krho}) If $z,z'\in\mathcal{Z}$
are such that $k(w,z)=k(w,z')$, for all $w\in\mathcal{Z}$, one would
also have $\rho(z,z_{0})-\rho(z,w)=\rho(z',z_{0})-\rho(z',w)$, for
all $w\in\mathcal{Z}$. In particular, by inserting $w=z$, and $w=z'$,
we obtain $\rho(z,z')=-\rho(z,z')=0$, i.e., $z=z'$. The second statement
follows readily by expressing $k$ in terms of $\rho$. \vspace{-4mm}
\end{proof}

\begin{proof}
(\textbf{Theorem} \ref{thm: 2sample_dkern}) Follows directly by inserting
the distance kernel from Lemma \ref{lem:kernel-from-semimetric} into
\eqref{eq: MMD}, and cancelling out the terms dependant on a single
random variable. Define $\theta:=\gamma_{k}^{2}(P,Q)$.
\setlength{\arraycolsep}{0.0em}
\begin{eqnarray*}
\theta &{} ={} &
\frac{1}{2}\mathbb{E}_{ZZ'}\left[\rho(Z,z_{0})+\rho(Z',z_{0})-\rho(Z,Z')\right]
\\
 &{}{}&\quad+\frac{1}{2}\mathbb{E}_{WW'}\left[\rho(W,z_{0})+\rho(W',z_{0}
)-\rho(W ,W')\right] \\
 &{} {}&\qquad
-\mathbb{E}_{ZW}\left[\rho(Z,z_{0})+\rho(W,z_{0})-\rho(Z,W)\right]\\
 &{} ={} &
\mathbb{E}_{ZW}\rho(Z,W)-\frac{\mathbb{E}_{ZZ'}\rho(Z,Z')}{2}-\frac{\mathbb{E}_
{WW'}\rho(W, W')}{2}.
\end{eqnarray*}
\vspace{-3mm}
\end{proof}

\begin{proof}
(\textbf{Theorem} \ref{thm: dcov_kern}) First, we note that $k$
is a valid reproducing kernel since
$k\left(\left(x,y\right),\left(x',y'\right)\right)=k_{\mathcal{X}}(x,x')k_{
\mathcal{Y}}(y,y')$,
where we have taken
$k_{\mathcal{X}}(x,x')=\rho_{\mathcal{X}}(x,x_{0})+\rho_{\mathcal{X}}(x',x_{0}
)-\rho_{\mathcal{X}}(x,x')$,
and
$k_{\mathcal{Y}}(y,y')=\rho_{\mathcal{Y}}(y,y_{0})+\rho_{\mathcal{Y}}(y',y_{0}
)-\rho_{\mathcal{Y}}(y,y')$,
as distance kernels induced by $\rho_{\mathcal{X}}$ and $\rho_{\mathcal{Y}}$,
respectively. Indeed, a product of two reproducing kernels is always
a valid reproducing kernel on the product space \citep[Lemma 4.6,
p.~114]{Steinwart2008book}.
To show equality to distance covariance, we start by expanding
$\theta:=\gamma_{k}^{2}(P_{XY},P_{X}P_{Y})$,
\setlength{\arraycolsep}{0.0em}
\begin{eqnarray*}
 \theta&{}={}&
 \overbrace{\mathbb{E}_{XY}\mathbb{E}_{X'Y'}k_{\mathcal{X}}(X,X')k_{\mathcal{Y}}
(Y , Y')}^{\theta_1}\\
 &{}{}&\quad + 
\overbrace{\mathbb{E}_{X}\mathbb{E}_{X'}k_{\mathcal{X}}(X,X')\mathbb{E}_{Y}
\mathbb { E } _ { Y' } k_{\mathcal{Y}}(Y,Y')}^{\theta_2}\\
 &{}{}&\qquad
-2\overbrace{\mathbb{E}_{X'Y'}\left[\mathbb{E}_{X}k_{\mathcal{X}}(X,X')\mathbb{E
} _ { Y } k_ { \mathcal{Y}}(Y,Y')\right]}^{\theta_3}.
\end{eqnarray*}
Note that
\begin{eqnarray*}
\lefteqn{\theta_1=\mathbb{E}_{XY}\mathbb{E}_{X'Y'}\rho_{\mathcal{X}}(X,X')\rho_{
\mathcal{Y}}(Y, Y')}\\
 && + 
2\mathbb{E}_{X}\rho_{\mathcal{X}}(X,x_{0})\mathbb{E}_{Y}\rho_{\mathcal{Y}}(Y,y_{
0})\\
&& +  2\mathbb{E}_{XY}\rho_{\mathcal{X}}(X,x_{0})\rho_{\mathcal{Y}}(Y,y_{0})\\
 && - 
2\mathbb{E}_{XY}\left[\rho_{\mathcal{X}}(X,x_{0})\mathbb{E}_{Y'}\rho_{\mathcal{Y
}}(Y,Y')\right]\\
 && -
2\mathbb{E}_{XY}\left[\rho_{\mathcal{Y}}(Y,y_{0})\mathbb{E}_{X'}\rho_{\mathcal{X
}}(X,X')\right],
\end{eqnarray*}
\begin{eqnarray*}
\lefteqn{\theta_2=
\mathbb{E}_{X}\mathbb{E}_{X'}\rho_{\mathcal{X}}(X,X')\mathbb{E}_{Y}\mathbb{E}_{
Y'}\rho_{\mathcal{Y}}(Y,Y')}\\
 && + 
4\mathbb{E}_{X}\rho_{\mathcal{X}}(X,x_{0})\mathbb{E}_{Y}\rho_{\mathcal{Y}}(Y,y_{
0})\\
 && - 
2\mathbb{E}_{X}\rho_{\mathcal{X}}(X,x_{0})\mathbb{E}_{Y}\mathbb{E}_{Y'}\rho_{
\mathcal{Y}}(Y,Y')\\
 && -
2\mathbb{E}_{Y}\rho_{\mathcal{Y}}(Y,y_{0})\mathbb{E}_{X}\mathbb{E}_{X'}\rho_{
\mathcal{X}}(X,X'),
\end{eqnarray*}
and
\begin{eqnarray*}
 \lefteqn{\theta_3=
\mathbb{E}_{X'Y'}\left[\mathbb{E}_{X}\rho_{\mathcal{X}}(X,X')\mathbb{E}_{Y}\rho_
{\mathcal{Y}}(Y,Y')\right]}\\
 && + 
3\mathbb{E}_{X}\rho_{\mathcal{X}}(X,x_{0})\mathbb{E}_{Y}\rho_{\mathcal{Y}}(Y,y_{
0})\\
 && +  \mathbb{E}_{XY}\rho_{\mathcal{X}}(X,x_{0})\rho_{\mathcal{Y}}(Y,y_{0})\\
 && - 
\mathbb{E}_{XY}\left[\rho_{\mathcal{X}}(X,x_{0})\mathbb{E}_{Y'}\rho_{\mathcal{Y}
}(Y,Y')\right]\\
 && -
\mathbb{E}_{XY}\left[\rho_{\mathcal{Y}}(Y,y_{0})\mathbb{E}_{X'}\rho_{\mathcal{X}
}(X,X')\right]\\
 && - 
\mathbb{E}_{X}\rho_{\mathcal{X}}(X,x_{0})\mathbb{E}_{Y}\mathbb{E}_{Y'}\rho_{
\mathcal{Y}}(Y,Y')\\
 && - 
\mathbb{E}_{Y}\rho_{\mathcal{Y}}(Y,y_{0})\mathbb{E}_{X}\mathbb{E}_{X'}\rho_{
\mathcal{X}}(X,X').
\end{eqnarray*}
The claim now follows by inserting the resulting expansions and cancelling
the appropriate terms. Note that only the leading terms in the expansions
remain.\end{proof}
\begin{rem}
It turns out that $k$ is not characteristic to
$\mathcal{M}_{+}^{1}(\mathcal{X}\times\mathcal{Y})$
--- i.e., it cannot distinguish between any two distributions on
$\mathcal{X}\times\mathcal{Y}$,
even if $k_{\mathcal{X}}$ and $k_{\mathcal{Y}}$ are characteristic.
However, since $\gamma_{k}$ is equal to the Brownian distance covariance,
we know that it can always distinguish between any $P_{XY}$ and its
product of marginals $P_{X}P_{Y}$ in the Euclidean case. Namely,
note that $k((x_{0},y),(x_{0},y'))=k((x,y_{0}),(x',y_{0}))=0$ for
all $x,x'\in\mathcal{X}$, $y,y'\in\mathcal{Y}$. That means that
for every two distinct $P_{Y},Q_{Y}\in\mathcal{M}_{+}^{1}(\mathcal{Y})$,
one has $\gamma_{k}^{2}(\delta_{x_{0}}P_{Y},\delta_{x_{0}}Q_{Y})=0$.
Thus, kernel in \eqref{eq: tensor_kernel} characterizes independence
but not equality of probability measures on the product space. Informally
speaking, the independence testing is an easier problem than homogeneity
testing on the product space.
\end{rem}

\subsection{Spectral Tests}\label{subsec:tests}

Assume that the null hypothesis holds, i.e., that $P=Q$. For a kernel $k$ and
a Borel probability measure $P$, define a kernel ``centred''
at $P$:
$\tilde{k}_{P}(z,z'):=k(z,z')+\mathbb{E}_{WW'}k(W,W')-\mathbb{E}_{W}k(z,
W)-\mathbb{E}_{W}k(z',W)$,
with $W,W'\overset{i.i.d.}{\sim}P$. Note that as a special case
for $P=\delta_{z_{0}}$ we recover the family of kernels in
\eqref{eq:generating_kernels},
and that $\mathbb{E}_{ZZ'}\tilde{k}_{P}(Z,Z')=0$, i.e.,
$\mu_{\tilde{k}_{P}}(P)=0$.
The centred kernel is important in characterizing the null distribution
of the V-statistic. To the centred kernel $\tilde{k}_{P}$ on domain
$\mathcal{Z}$, one associates the \emph{integral kernel operator}
$S_{\tilde{k}_{P}}:L_{P}^{2}(\mathcal{Z})\to L_{P}^{2}(\mathcal{Z})$
\citep[see][p.~126--127]{Steinwart2008book}, given by:
\begin{eqnarray}
S_{\tilde{k}_{P}}g(z) & = &
\int_{\mathcal{Z}}\tilde{k}_{P}(z,w)g(w)\,dP(w).\label{eq: kernel_operator}
\end{eqnarray}
The following theorem is a special case of \citet[Theorem 12]{Gretton2012}.
For simplicity, we focus on the case where $m=n$.
\begin{thm}
\label{thm: null_2sample}Let $\mathbf{\mathbf{Z}}=\left\{ Z_{i}\right\}
_{i=1}^{m}$
and $\mathbf{W}=\left\{ W_{i}\right\} _{i=1}^{m}$ be two i.i.d. samples
from $P\in\mathcal{M}_{+}^{1}(\mathcal{Z})$, and let $S_{\tilde{k}_{P}}$ be a
trace class operator.
Then
\begin{eqnarray}
\frac{m}{2}\hat{\gamma}_{k,V}^{2}(\mathbf{Z},\mathbf{W}) & \rightsquigarrow &
\sum_{i=1}^{\infty}\lambda_{i}N_{i}^{2},\label{eq: null_dist}
\end{eqnarray}
where $N_{i}\overset{i.i.d.}{\sim}\mathcal{N}(0,1)$, $i\in\mathbb{N}$,
and \textup{$\left\{ \lambda_{i}\right\} _{i=1}^{\infty}$} are the
eigenvalues of the operator $S_{\tilde{k}_{P}}$.
\end{thm}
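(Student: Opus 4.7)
The plan is to rewrite $m\hat{\gamma}_{k,V}^{2}$ as the squared RKHS norm of a rescaled difference of empirical mean embeddings, apply a central limit theorem in the Hilbert space $\mathcal{H}_{k}$, and then decompose the Gaussian limit via Karhunen--Lo\`eve to obtain the weighted $\chi^{2}$ representation. First, expanding the three sums in \eqref{eq: empirical_mmd} as inner products in $\mathcal{H}_{k}$ gives $\hat{\gamma}_{k,V}^{2}(\mathbf{Z},\mathbf{W})=\bigl\|\tfrac{1}{m}\sum_{i=1}^{m}k(\cdot,Z_{i})-\tfrac{1}{m}\sum_{i=1}^{m}k(\cdot,W_{i})\bigr\|_{\mathcal{H}_{k}}^{2}$. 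Under the null $P=Q$, subtracting $\mu_{k}(P)$ from each sample mean leaves this difference unchanged, so $\xi_{i}:=k(\cdot,Z_{i})-\mu_{k}(P)$ and $\eta_{i}:=k(\cdot,W_{i})-\mu_{k}(P)$ are mean-zero i.i.d.\ random elements of $\mathcal{H}_{k}$ with common covariance operator $C_{P}$.

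Next I would invoke the CLT in a separable Hilbert space. The trace-class hypothesis on $S_{\tilde{k}_{P}}$ is equivalent to $\mathbb{E}\|\xi_{1}\|_{\mathcal{H}_{k}}^{2}=\mathbb{E}\,\tilde{k}_{P}(Z,Z)=\mathrm{tr}(C_{P})<\infty$, which is precisely the second-moment condition needed. Joint application to the two independent samples yields $\bigl(m^{-1/2}\sum_{i}\xi_{i},\,m^{-1/2}\sum_{i}\eta_{i}\bigr)\rightsquigarrow(G_{1},G_{2})$, with $G_{1},G_{2}$ independent Gaussian elements of $\mathcal{H}_{k}$ of law $\mathcal{N}(0,C_{P})$. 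The continuous mapping theorem applied to $\|\cdot\|_{\mathcal{H}_{k}}^{2}$ then gives $m\hat{\gamma}_{k,V}^{2}\rightsquigarrow\|G_{1}-G_{2}\|_{\mathcal{H}_{k}}^{2}$, and since $G_{1}-G_{2}\sim\mathcal{N}(0,2C_{P})$, this coincides in law with $2\|G\|_{\mathcal{H}_{k}}^{2}$ for $G\sim\mathcal{N}(0,C_{P})$.

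To finish, I would expand $G$ in the eigenbasis of $C_{P}$. The nonzero spectrum of $C_{P}$ on $\mathcal{H}_{k}$ coincides with that of the $L^{2}(P)$-integral operator $S_{\tilde{k}_{P}}$: if $\psi_{i}\in L^{2}(P)$ satisfies $S_{\tilde{k}_{P}}\psi_{i}=\lambda_{i}\psi_{i}$, then $\phi_{i}:=\lambda_{i}^{-1/2}\,\mathbb{E}_{W}[\tilde{k}_{P}(\cdot,W)\psi_{i}(W)]$ is an orthonormal eigenvector of $C_{P}$ in $\mathcal{H}_{k}$ with the same eigenvalue. The Karhunen--Lo\`eve expansion $G=\sum_{i}\sqrt{\lambda_{i}}N_{i}\phi_{i}$ with $N_{i}\overset{i.i.d.}{\sim}\mathcal{N}(0,1)$ then gives $\|G\|_{\mathcal{H}_{k}}^{2}=\sum_{i}\lambda_{i}N_{i}^{2}$, and therefore $\tfrac{m}{2}\hat{\gamma}_{k,V}^{2}\rightsquigarrow\sum_{i}\lambda_{i}N_{i}^{2}$.

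The hard part is the continuous mapping step, because the squared-norm functional on $\mathcal{H}_{k}$ is continuous but unbounded in infinite dimensions. A clean workaround is truncation: projecting onto the first $N$ eigenvectors of $C_{P}$ reduces the statistic to a finite quadratic form that converges to $\sum_{i\le N}\lambda_{i}N_{i}^{2}$ by the classical multivariate CLT, while the tail contribution is controlled uniformly in $m$ using $\mathbb{E}\|\xi_{1}\|_{\mathcal{H}_{k}}^{2}=\sum_{i}\lambda_{i}<\infty$, which is precisely where the trace-class hypothesis is essential; letting $N\to\infty$ then delivers \eqref{eq: null_dist}.
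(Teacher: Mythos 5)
Your argument is essentially correct, but it is genuinely different from what the paper does: the paper offers no proof at all, presenting the statement as a special case of Theorem 12 of \citet{Gretton2012}, whose proof rests on classical asymptotics for degenerate two-sample V-statistics. Your route is a direct, self-contained one: write $\hat{\gamma}_{k,V}^{2}$ as $\bigl\Vert \frac{1}{m}\sum_{i}(\xi_{i}-\eta_{i})\bigr\Vert_{\mathcal{H}_{k}}^{2}$ with $\xi_{i}=k(\cdot,Z_{i})-\mu_{k}(P)$, $\eta_{i}=k(\cdot,W_{i})-\mu_{k}(P)$, apply the CLT in the (separable) Hilbert space $\mathcal{H}_{k}$ to the i.i.d.\ mean-zero increments $\xi_{i}-\eta_{i}$ (which have covariance $2C_{P}$ and finite second moment exactly under the trace-class hypothesis, since $\mathbb{E}\Vert\xi_{1}\Vert_{\mathcal{H}_{k}}^{2}=\mathbb{E}\tilde{k}_{P}(Z,Z)$), then use continuous mapping and Karhunen--Lo\`eve. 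This buys a short argument that works precisely because the statistic is the \emph{biased} V-statistic, i.e.\ a genuine squared norm of an empirical embedding; the U-statistic route quoted by the paper is what one would need for the unbiased estimator, where the diagonal terms are removed and the plug-in CLT argument does not apply directly.

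Three small repairs. First, the continuous mapping theorem requires only continuity of $\Vert\cdot\Vert_{\mathcal{H}_{k}}^{2}$, not boundedness, so the step you flag as ``the hard part'' is immediate and your truncation argument, while valid, is unnecessary; the real technical input is the Hilbert-space CLT itself, for which you should note separability of $\mathcal{H}_{k}$ (or essential separability of $k(\cdot,Z)$). Second, your eigenvector intertwining formula is slightly off: $\mathbb{E}_{W}[\tilde{k}_{P}(\cdot,W)\psi_{i}(W)]$ differs from an element of $\mathcal{H}_{k}$ by an additive constant function. The clean statement is that with $T:L_{P}^{2}\to\mathcal{H}_{k}$, $T\psi=\mathbb{E}_{W}\bigl[\psi(W)\bigl(k(\cdot,W)-\mu_{k}(P)\bigr)\bigr]$, one has $C_{P}=TT^{*}$ and $S_{\tilde{k}_{P}}=T^{*}T$, so the nonzero spectra (with multiplicities) coincide and $\phi_{i}=\lambda_{i}^{-1/2}T\psi_{i}$; this also justifies the Karhunen--Lo\`eve expansion of $G\sim\mathcal{N}(0,C_{P})$ that you use. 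Third, the identification of the trace-class condition with $\mathbb{E}_{Z\sim P}\tilde{k}_{P}(Z,Z)<\infty$ deserves one line (it is how the paper itself reads the hypothesis), after which your chain $\frac{m}{2}\hat{\gamma}_{k,V}^{2}\rightsquigarrow\Vert G\Vert_{\mathcal{H}_{k}}^{2}=\sum_{i}\lambda_{i}N_{i}^{2}$ is complete.
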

Note that this result requires that the integral kernel operator associated to
the underlying probability measure $P$ is a trace class operator, i.e., that $\mathbb{E}_{Z\sim P}k(Z,Z)<\infty$. 
As before, the sufficient condition for this to hold for all probability measures
is that $k$ is a bounded function. In the case of a distance kernel,
this is the case if the domain $\mathcal{Z}$ has a bounded diameter
with respect to the semimetric $\rho$, i.e., that
$\sup_{z,z'\in\mathcal{Z}}\rho(z,z')<\infty$.

The null distribution of HSIC takes an analogous form to \eqref{eq: null_dist}
of a weighted sum of chi-squares, but with coefficients corresponding
to the products of the eigenvalues of integral operators $S_{\tilde{k}_{P_{X}}}$
and $S_{\tilde{k}_{P_{Y}}}$. The following Theorem is in \citet[Theorem
4]{Zhang2011}
and gives an asymptotic form for the null distribution of HSIC. See
also \citet[Remark 2.9]{Lyons2011}.
\begin{thm}
\label{thm: null_indep-1}Let $\mathbf{\mathbf{Z}}=\left\{
\left(X_{i},Y_{i}\right)\right\} _{i=1}^{m}$
be an i.i.d. sample from $P_{XY}=P_{X}P_{Y}$, with values in
$\mathcal{X}\times\mathcal{Y}$.
Let $S_{\tilde{k}_{P_{X}}}:L_{P_{X}}^{2}(\mathcal{X})\to
L_{P_{X}}^{2}(\mathcal{X})$,
and $S_{\tilde{k}_{P_{Y}}}:L_{P_{Y}}^{2}(\mathcal{Y})\to
L_{P_{Y}}^{2}(\mathcal{Y})$ be trace class operators.
Then
\begin{eqnarray}
mHSIC(\mathbf{Z};k_{\mathcal{X}},k_{\mathcal{Y}}) & \rightsquigarrow &
\sum_{i=1}^{\infty}\sum_{j=1}^{\infty}\lambda_{i}\eta_{j}N_{i,j}^{2},\label{eq:
null_dist-hsic-1}
\end{eqnarray}
where $N_{i,j}\sim\mathcal{N}(0,1)$, $i,j\in\mathbb{N}$, are independent
and \textup{$\left\{ \lambda_{i}\right\} _{i=1}^{\infty}$} and $\left\{
\eta_{j}\right\} _{j=1}^{\infty}$
are the eigenvalues of the operators $S_{\tilde{k}_{P_{X}}}$
and $S_{\tilde{k}_{P_{Y}}}$,
respectively.
\end{thm}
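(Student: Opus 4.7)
The plan is to express $m\cdot HSIC$ as a quadratic form in asymptotically i.i.d.\ Gaussians via Mercer expansions of the individually centered kernels and a multivariate CLT. First, using the idempotency and symmetry of $H$, I would rewrite $m\cdot HSIC = m^{-1}\mathrm{Tr}(\tilde K_\mathcal{X}\tilde K_\mathcal{Y})$ with $\tilde K_\mathcal{X} := HK_\mathcal{X}H$, and show that the doubly empirical centering may be replaced by the population centering, i.e., by matrices with entries $\tilde k_{P_X}(X_i,X_j)$ and $\tilde k_{P_Y}(Y_i,Y_j)$, incurring only an $o_P(1)$ error. This is cleanest through the cross-covariance operator formulation $HSIC = \|\hat\Sigma_{XY}\|_{HS}^2$: the empirical-vs-population mean discrepancy $(\bar\phi_X - \mu_X)\otimes(\bar\phi_Y - \mu_Y)$ is $O_P(m^{-1})$ in Hilbert--Schmidt norm, so at the scale $\sqrt m\,\hat\Sigma_{XY}$ it contributes $o_P(1)$.

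Next I would substitute the Mercer expansions $\tilde k_{P_X}(x,x') = \sum_i\lambda_i\phi_i(x)\phi_i(x')$ and $\tilde k_{P_Y}(y,y') = \sum_j\eta_j\psi_j(y)\psi_j(y')$ into the trace, where $\{\phi_i\},\{\psi_j\}$ are orthonormal in $L^2_{P_X}, L^2_{P_Y}$. A direct reorganization using $\sum_{r,s}\phi_i(X_r)\phi_i(X_s)\psi_j(Y_r)\psi_j(Y_s) = \bigl(\sum_r\phi_i(X_r)\psi_j(Y_r)\bigr)^2$ yields
\begin{equation*}
m\cdot HSIC(\mathbf Z;k_\mathcal{X},k_\mathcal{Y}) = \sum_{i,j}\lambda_i\eta_j\biggl(\frac{1}{\sqrt m}\sum_{l=1}^m\phi_i(X_l)\psi_j(Y_l)\biggr)^2 + o_P(1).
\end{equation*}
Under the null, $X_l\perp Y_l$, and the eigenfunctions $\phi_i,\psi_j$ of the integral operators associated with the centered kernels $\tilde k_{P_X},\tilde k_{P_Y}$ have zero mean (an easy consequence of $\mathbb{E}_{P_X}\tilde k_{P_X}(\cdot,x') = 0$ applied to any eigenfunction with nonzero eigenvalue), so the variables $Z^{(l)}_{ij}:=\phi_i(X_l)\psi_j(Y_l)$ are i.i.d.\ in $l$, mean zero, with $\mathrm{Cov}(Z^{(l)}_{ij},Z^{(l)}_{i'j'}) = \delta_{ii'}\delta_{jj'}$ by independence and orthonormality.

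Finally, by the multivariate CLT the finite-dimensional marginals $\{m^{-1/2}\sum_l Z^{(l)}_{ij}\}_{i\le I,j\le J}$ converge jointly to i.i.d.\ standard normals $\{N_{ij}\}$, so the truncated statistic converges to $\sum_{i\le I,j\le J}\lambda_i\eta_j N_{ij}^2$. The main technical obstacle is the tightness argument needed to pass from finite-dimensional convergence to the full double sum: using the trace class hypothesis, one has $\sum_{i,j}\lambda_i\eta_j = \bigl(\sum_i\lambda_i\bigr)\bigl(\sum_j\eta_j\bigr) < \infty$, and since each squared summand has expectation $1$, the tail $\sum_{i>I\text{ or }j>J}\lambda_i\eta_j(\cdots)^2$ is $O_P\bigl(\sum_{i>I\text{ or }j>J}\lambda_i\eta_j\bigr)$ uniformly in $m$, vanishing as $I,J\to\infty$. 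A standard Slutsky/diagonal argument, parallel to the proof of Theorem~\ref{thm: null_2sample} cited in the paper, then completes the proof.
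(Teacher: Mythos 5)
The paper does not actually prove this theorem: it is quoted from \citet[Theorem 4]{Zhang2011} (with a pointer to \citet[Remark 2.9]{Lyons2011}), so there is no in-paper argument to compare against. Your proposal reconstructs essentially the standard proof underlying that citation, and it is sound. The key steps all check out: the identity $m\,HSIC=m^{-1}\mathrm{Tr}(\tilde K_{\mathcal X}\tilde K_{\mathcal Y})$ via idempotency of $H$; the fact that the empirically centred and population-centred cross-covariance operators differ exactly by $-(\bar\phi_X-\mu_X)\otimes(\bar\psi_Y-\mu_Y)$, which is $O_P(m^{-1})$ in Hilbert--Schmidt norm and hence negligible at scale $m$ under the (degenerate) null; the zero-mean property of eigenfunctions with nonzero eigenvalue, which together with independence and orthonormality gives $\mathbb{E}Z^{(l)}_{ij}=0$ and unit, uncorrelated variances; the finite-dimensional CLT; and the uniform-in-$m$ tail control via $\mathbb{E}\bigl(m^{-1/2}\sum_l Z^{(l)}_{ij}\bigr)^2=1$, nonnegativity of the eigenvalues, and $\sum_{i,j}\lambda_i\eta_j=\bigl(\sum_i\lambda_i\bigr)\bigl(\sum_j\eta_j\bigr)<\infty$ from the trace class hypothesis, which licenses the standard approximation (Billingsley-type) argument. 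Two minor points you should make explicit in a full write-up: (i) the substitution of the spectral expansions $\tilde k_{P_X}=\sum_i\lambda_i\phi_i\otimes\phi_i$, $\tilde k_{P_Y}=\sum_j\eta_j\psi_j\otimes\psi_j$ and the subsequent interchange of the sums over $(i,j)$ and over sample indices needs a justification (convergence of the expansions in $L^2(P_X\times P_X)$ and $L^2(P_Y\times P_Y)$, available for trace class positive operators, suffices, with eigenfunctions of zero eigenvalue contributing nothing); and (ii) the quantities involved must be finite, which is where the paper's standing assumption $\mathbb{E}k(Z,Z)<\infty$ (equivalently the trace class condition you invoke) enters. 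With these caveats your route is exactly the spectral-plus-CLT argument of the cited result, in parallel with Theorem~\ref{thm: null_2sample} as you note.
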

Note that if $\mathcal{X}$ and $\mathcal{Y}$ have bounded diameters
w.r.t.~$\rho_{\mathcal{X}}$ and $\rho_{\mathcal{Y}}$, Theorem \ref{thm:
null_indep-1}
applies to distance kernels induced by $\rho_{\mathcal{X}}$ and
$\rho_{\mathcal{Y}}$
for all $P_{X}\in\mathcal{\ensuremath{M}}_{+}^{1}(\mathcal{X})$,
$P_{Y}\in\mathcal{\ensuremath{M}}_{+}^{1}(\mathcal{Y})$ .

\subsection{A Characteristic Function Based Interpretation}\label{subsec:charac}

The distance covariance in \eqref{eq: dCov_in_terms_of_distances}
was defined by \citet{Szekely2007} in terms of a weighted distance
between characteristic functions. We briefly review this interpretation
here, however we show that this approach \emph{cannot} be used to derive
a kernel-based measure of dependence (this result was first noted
by \citet{GreFukSri09}, and is included here in the interests of
completeness). Let $X$ be a random vector on $\mathcal{X=}\mathbb{R}^{p}$
and $Y$ a random vector on $\mathcal{Y}=\mathbb{R}^{q}$. The characteristic
function of $X$ and $Y$, respectively, will be denoted by $f_{X}$
and $f_{Y}$, and their joint characteristic function by $f_{XY}$.
The distance covariance $\mathcal{V}(X,Y)$ is defined via the norm
of $f_{XY}-f_{X}f_{Y}$ in a weighted $L_{2}$ space on $\mathbb{R}^{p+q}$,
i.e., 
\begin{equation}
\mathcal{V}^{2}(X,Y)=\int\left|f_{X,Y}(t,s)-f_{X}(t)f_{Y}
(s)\right|^{2}w(t,s)\,dt\,ds,\label{eq: dcov_via_characteristic-1}
\end{equation}
for a particular choice of weight function given by 
\begin{equation}
w(t,s)=\frac{1}{c_{p}c_{q}}\cdot\frac{1}{\left\Vert t\right\Vert
^{1+p}\left\Vert s\right\Vert ^{1+q}},\label{eq: dcov_weight-1}
\end{equation}
where $c_{d}=\pi^{\frac{1+d}{2}}/\Gamma(\frac{1+d}{2})$, $d\geq1$.
An important aspect of distance covariance is that $\mathcal{V}(X,Y)=0$
if and only if $X$ and $Y$ are independent. We next obtain a similar
statistic in the kernel setting. Write
$\mathcal{Z}=\mathcal{X}\times\mathcal{Y}$,
and let $k(z,z')=\kappa(z-z')$ be a translation invariant RKHS kernel
on $\mathcal{Z}$, where $\kappa:\mathcal{Z}\to\mathbb{R}$ is a bounded
continuous function. Using Bochner's theorem, $\kappa$ can be written
as:
$$\kappa(z)=\int e^{-z^{\top}u}d\Lambda(u),$$

for a finite non-negative Borel measure $\Lambda$. It follows
\citet{GreFukSri09}
that
\[
\gamma_{k}^{2}(P_{XY},P_{X}P_{Y})=\int\left|f_{X,Y}(t,s)-f_{X
}(t)f_{Y}(s)\right|^{2}d\Lambda(t,s),
\]
which is in clear correspondence with \eqref{eq: dcov_via_characteristic-1}.
However, the weight function in \eqref{eq: dcov_weight-1} is not
integrable --- so one cannot find a translation invariant kernel for
which $\gamma_{k}$ coincides with the distance covariance. By contrast,
note the kernel in \eqref{eq: tensor_kernel} is \emph{not} translation
invariant.

\subsection{Restriction on Probability Measures}\label{subsec:restrict}

In general, distance kernels and their products are continuous but
unbounded, so kernel embeddings are not defined for all Borel probability
measures. Thus, one needs to restrict the attention to a particular
class of Borel probability measures for which kernel embeddings exist, and a sufficient condition for this is that $\mathbb{E}_{Z\sim P}k^{1/2}(Z,Z)<\infty$, by the Riesz representation theorem. Let $k$ be
a measurable reproducing kernel on $\mathcal{Z}$, and denote, for $\theta>0$,

{\footnotesize 
\begin{equation}
\label{mk}
\mathcal{M}_{k}^{\theta}(\mathcal{Z})=\left\{ \nu\in\mathcal{M}(\mathcal{Z})\,:\,\int
k^{\theta}(z,z)\,d\left|\nu\right|(z)<\infty\right\} .
\end{equation}
}
Note that the maximum mean discrepancy $\gamma_k(P,Q)$ is well defined $\forall
P,Q\in\mathcal{M}_{k}^{1/2}(\mathcal{Z})\cap\mathcal{M}_{+}^{1}(\mathcal{Z})$.

Now, let $\rho$ be a semimetric of negative type. Then, we can consider the class of probability measures that have a finite $\theta$-moment with respect to $\rho$:
\begin{eqnarray}
\label{mrho}
\mathcal{M}_{\rho}^{\theta}(\mathcal{Z})=\{ \nu\in\mathcal{M}(\mathcal{Z})\,&{}:{}&\,\exists z_0\in\mathcal Z,\\ {}& s.t.{}\; & \int
\rho^{\theta}(z,z_0)\,d\left|\nu\right|(z)<\infty\}\nonumber.
\end{eqnarray}
To ensure existence of energy distance $D_{E,\rho}(P,Q)$, we need to assume that $P,Q\in \mathcal{M}_{\theta}^{1}(\mathcal{Z})$, as otherwise expectations $\mathbb E_{ZZ'}\rho(Z,Z')$, $\mathbb E_{WW'}\rho(W,W')$ and $\mathbb E_{ZW}\rho(Z,W)$ may be undefined.
The following proposition shows that the classes of probability measures in \eqref{mk} and \eqref{mrho} coincide at $\theta=n/2$, for $n\in\mathbb N$, whenever $\rho$ is generated by kernel $k$.  
\begin{prop}
Let $k$ be a kernel that generates semimetric $\rho$, and let $n\in\mathbb N$. Then, $\mathcal{M}_{k}^{n/2}(\mathcal{Z})=\mathcal{M}_{\theta}^{n/2}(\mathcal{Z})$.
In particular, if $k_{1}$ and $k_{2}$ generate the same semimetric
$\rho$, then
\textup{$\mathcal{M}^{n/2}_{k_{1}}(\mathcal{Z})=\mathcal{M}^{n/2}_{k_{2}}(\mathcal{Z})$.}
\end{prop}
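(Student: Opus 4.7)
The plan is to prove the two inclusions $\mathcal{M}_{k}^{n/2}(\mathcal{Z})\subseteq \mathcal{M}_{\rho}^{n/2}(\mathcal{Z})$ and $\mathcal{M}_{\rho}^{n/2}(\mathcal{Z})\subseteq \mathcal{M}_{k}^{n/2}(\mathcal{Z})$ simultaneously by establishing a two-sided pointwise comparison between $k^{1/2}(z,z)$ and $\rho^{1/2}(z,z_0)$ that differs only by an additive constant in $z_0$. Since $k$ generates $\rho$, we have the identity $\rho(z,z_0)=k(z,z)+k(z_0,z_0)-2k(z,z_0)$ from \eqref{eq:rhoFromK}. Combined with the Cauchy--Schwarz inequality in $\mathcal{H}_k$ (applied to the reproducing property $k(z,z_0)=\langle k(\cdot,z),k(\cdot,z_0)\rangle_{\mathcal{H}_k}$), this gives $|k(z,z_0)|\leq k^{1/2}(z,z)\,k^{1/2}(z_0,z_0)$, and hence the pinch
\begin{equation*}
\bigl(k^{1/2}(z,z)-k^{1/2}(z_0,z_0)\bigr)^{2}\leq \rho(z,z_0)\leq \bigl(k^{1/2}(z,z)+k^{1/2}(z_0,z_0)\bigr)^{2}.
\end{equation*}
Taking square roots yields
\begin{equation*}
\bigl|k^{1/2}(z,z)-k^{1/2}(z_0,z_0)\bigr|\leq \rho^{1/2}(z,z_0)\leq k^{1/2}(z,z)+k^{1/2}(z_0,z_0).
\end{equation*}

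Next I would raise both inequalities to the $n$-th power and apply the elementary bound $(a+b)^{n}\leq 2^{n-1}(a^{n}+b^{n})$ for $a,b\geq 0$, obtaining a constant $C_n$ with
\begin{equation*}
\rho^{n/2}(z,z_0)\leq C_n\bigl(k^{n/2}(z,z)+k^{n/2}(z_0,z_0)\bigr),\qquad k^{n/2}(z,z)\leq C_n\bigl(\rho^{n/2}(z,z_0)+k^{n/2}(z_0,z_0)\bigr).
\end{equation*}
Fix any $z_0\in\mathcal{Z}$ and integrate both inequalities against the finite positive measure $|\nu|$. Because $k^{n/2}(z_0,z_0)$ is a finite constant and $|\nu|(\mathcal{Z})<\infty$, the additive term contributes only $C_n k^{n/2}(z_0,z_0)|\nu|(\mathcal{Z})<\infty$. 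Hence $\int k^{n/2}(z,z)\,d|\nu|(z)<\infty$ if and only if $\int \rho^{n/2}(z,z_0)\,d|\nu|(z)<\infty$, which gives $\mathcal{M}_k^{n/2}(\mathcal{Z})=\mathcal{M}_\rho^{n/2}(\mathcal{Z})$. The ``in particular'' assertion is then immediate: if $k_1$ and $k_2$ both generate the same $\rho$, then each of $\mathcal{M}_{k_1}^{n/2}(\mathcal{Z})$ and $\mathcal{M}_{k_2}^{n/2}(\mathcal{Z})$ equals $\mathcal{M}_\rho^{n/2}(\mathcal{Z})$.

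I do not expect any serious obstacle; this is essentially Cauchy--Schwarz plus an elementary power inequality. The only mildly subtle point is that the definition of $\mathcal{M}_\rho^{n/2}(\mathcal{Z})$ requires existence of \emph{some} reference point $z_0$, whereas my bound uses an arbitrary fixed $z_0$. But this is harmless: by Propositions \ref{pro:semimetric_hilbertian_metric} and \ref{pro:power and log of cnd}, $\rho^{1/2}$ is a (pseudo)metric (being the norm-distance of a Hilbert-space embedding), so $\rho^{1/2}(z,z_0')\leq \rho^{1/2}(z,z_0)+\rho^{1/2}(z_0,z_0')$ and another application of $(a+b)^n\leq 2^{n-1}(a^n+b^n)$ shows that finiteness of the $n/2$-moment with respect to one reference point is equivalent to finiteness with respect to any other. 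Thus the choice of $z_0$ in my argument is truly irrelevant, completing the equivalence.
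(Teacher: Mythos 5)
Your proof is correct, and for the harder inclusion it takes a genuinely different --- and simpler --- route than the paper's. The inclusion $\mathcal{M}_k^{n/2}(\mathcal{Z})\subseteq\mathcal{M}_\rho^{n/2}(\mathcal{Z})$ is handled essentially as in the paper (triangle inequality in $\mathcal{H}_k$ plus the convexity bound $(a+b)^{2\theta}\le 2^{2\theta-1}\left(a^{2\theta}+b^{2\theta}\right)$). For the reverse inclusion, however, the paper works at the level of integrals: it applies the reverse triangle inequality and Jensen's inequality to pull the exponent outside the integral, expands $\left(\Vert k(\cdot,z)\Vert_{\mathcal{H}_k}-\Vert k(\cdot,z_0)\Vert_{\mathcal{H}_k}\right)^n$ by the binomial theorem, and runs an induction on $n$ to control the lower-order terms. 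You instead observe the pointwise pinch $\bigl|k^{1/2}(z,z)-k^{1/2}(z_0,z_0)\bigr|\le\rho^{1/2}(z,z_0)\le k^{1/2}(z,z)+k^{1/2}(z_0,z_0)$ (Cauchy--Schwarz, equivalently the triangle and reverse triangle inequalities in $\mathcal{H}_k$), rearrange it to $k^{1/2}(z,z)\le\rho^{1/2}(z,z_0)+k^{1/2}(z_0,z_0)$, raise to the $n$-th power, and integrate against the finite measure $|\nu|$; this eliminates the induction and the binomial bookkeeping entirely, and in fact yields $\mathcal{M}_k^{\theta}(\mathcal{Z})=\mathcal{M}_\rho^{\theta}(\mathcal{Z})$ for every real $\theta\ge 1/2$ (and even for $0<\theta<1/2$, using subadditivity of $t\mapsto t^{2\theta}$), which is stronger than the half-integer statement proved in the paper. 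Two small remarks: your closing discussion of the reference point is correct but not strictly needed, since the existential quantifier in \eqref{mrho} is discharged by taking the given witness $z_0$ in one direction and by noting that your bound produces a witness for an arbitrary $z_0$ in the other; and the triangle inequality for $\rho^{1/2}$ already follows from Proposition \ref{pro:semimetric_hilbertian_metric} (or Proposition \ref{pro: properties of Krho}), so the appeal to Proposition \ref{pro:power and log of cnd} is superfluous there.
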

\begin{proof}
Let $\theta\ge\frac{1}{2}$. Note that $a^{2\theta}$ is a convex
function of $a$. Suppose $\nu\in \mathcal{M}^\theta_k(\mathcal{Z})$. Then, we have
\small
\begin{eqnarray*}
{}&{}&\int \rho^\theta(z,z_0)\,d|\nu|(z)\\
{}&{}&=\int \Vert
k(\cdot,z)-k(\cdot,x_0)\Vert^{2\theta}_{\mathcal{H}_k}\,d|\nu|(z)\\ 
{}&{}&\le \int \left(\Vert k(\cdot,z)\Vert_{\mathcal{H}_k} + \Vert
k(\cdot,z_0)\Vert_{\mathcal{H}_k}\right)^{2\theta}\,d|\nu|(z)\nonumber\\
{}&{}&\le 2^{2\theta-1}\left(\int \Vert k(\cdot,z)\Vert^{2\theta}_{\mathcal{H}_k}\,d|\nu|(z)
+ \int \Vert
k(\cdot,z_0)\Vert^{2\theta}_{\mathcal{H}_k}\,d|\nu|(z)\right)\nonumber\\
{}&{}&=2^{2\theta-1}\left(\int
k^\theta(z,z)\,d|\nu|(z)+k^\theta(z_0,z_0)|\nu|(\Cal{Z})\right)\nonumber\\
{}&{}&<\;\infty,\nonumber 
\end{eqnarray*}
\normalsize
where we have invoked the Jensen's inequality for convex functions. From the
above it is clear that
$\Cal{M}^\theta_k(\Cal{Z})\subset\Cal{M}^\theta_\rho(\Cal{Z})$, for $\theta\ge 1/2$.
\par To prove the other direction, we show by induction that
$\Cal{M}^\theta_\rho(\Cal{Z})\subset
\Cal{M}^{n/2}_k(\Cal{Z})$ for $\theta\ge \frac{n}{2}$, $n\in\bb{N}$.
Let $n=1$. Let $\theta\ge \frac{1}{2}$, and suppose that
$\nu\in\Cal{M}^\theta_\rho(\Cal{X})$. Then, by invoking the reverse triangle and Jensen's inequalities, we have:
\begin{eqnarray*}
\lefteqn{\int\rho^{\theta}(z,z_0)d\left|\nu\right|(z)
 = \int\left\Vert k(\cdot,z)-k(\cdot,z_{0})\right\Vert^{2\theta}
_{\mathcal{H}_{k}}d|\nu|(z)}\\
 && \qquad\quad\geq
\int\left|k^{1/2}(z,z)-k^{1/2}(z_{0},z_{0})\right|^{2\theta}d|\nu|(z)\\
 && \qquad\quad\geq \Big|\int k^{1/2}(z,z)\,d|\nu|(z)-\left\Vert \nu\right\Vert
_{TV}k^{1/2}(z_{0},z_{0})\Big|^{2\theta},
\end{eqnarray*}
which implies $\nu\in\Cal{M}^{1/2}_k(\Cal{Z})$, thereby satisfying the
result for $n=1$. Suppose the result holds for $\theta\ge
\frac{n-1}{2}$, i.e., $\Cal{M}^\theta_\rho(\Cal{Z})\subset
\Cal{M}^{(n-1)/2}_k(\Cal{Z})$ for $\theta\ge\frac{n-1}{2}$. Let
$\nu\in \Cal{M}^\theta_\rho(\Cal{Z})$ for $\theta\ge \frac{n}{2}$. Then we have
\small
\begin{eqnarray*}
{}&{}&\int\rho^\theta(z,z_0)\,d|\nu|(z)\\
{}&{}&=\;\int \left(\Vert
k(\cdot,z)-k(\cdot,z_0)\Vert^{n}_{\mathcal{H}_k}\right)^\frac{2\theta}{n}\,
d|\nu|(z)\\
{}&{}&\ge\;\left(\int \Vert
k(\cdot,z)-k(\cdot,z_0)\Vert^{n}_{\mathcal{H}_k}\,
d|\nu|(z)\right)^\frac{2\theta}{n}\\
{}&{}&\ge\;\left(\int \left|\Vert
k(\cdot,z)\Vert_{\mathcal{H}_k}-\Vert k(\cdot,z_0)\Vert_{\mathcal{H}_k}\right|^{n}\,
d|\nu|(z)\right)^\frac{2\theta}{n}\\
{}&{}&\ge\;\left|\int \left(\Vert
k(\cdot,z)\Vert_{\mathcal{H}_k}-\Vert k(\cdot,z_0)\Vert_{\mathcal{H}_k}\right)^{n}\,
d|\nu|(z)\right|^\frac{2\theta}{n}\\
{}&{}&=\;\left|\int \sum^n_{r=0} (-1)^r\left(\begin{array}{c} n\\
r\end{array}\right)\Vert
k(\cdot,z)\Vert^{n-r}_{\mathcal{H}_k}\Vert k(\cdot,z_0)\Vert^r_{\mathcal{H}_k}\,
d|\nu|(z)\right|^\frac{2\theta}{n}\\
{}&{}&=\;\Bigg|\underbrace{\int k^{\frac{n}{2}}(z,z)\,d|\nu|(z)}_{A}\\
{}&{}&\quad+\;\underbrace{\sum^{n}_{r=1} (-1)^r\left(\begin{array}{c} n\\
r\end{array}\right) k^{\frac{r}{2}}(z_0,z_0)\int
k^{\frac{n-r}{2}}(z,z)\,
d|\nu|(z)}_{B}\Bigg|^\frac{2\theta}{n}.
\end{eqnarray*}
\normalsize
Note that the terms in $B$ are finite as for
$\theta\ge\frac{n}{2}\ge\frac{n-1}{2}\ge\cdots\ge\frac{1}{2}$, we have
$\Cal{M}^\theta_\rho(\Cal{Z})\subset\Cal{M}^{(n-1)/2}_k(\Cal{Z}
)\subset\cdots\subset\Cal{M}^1_k(\Cal{Z})\subset\Cal{M}^{1/2}_k(\Cal{Z}
)$ and therefore $A$ is finite, which means $\nu\in
\Cal{M}^{n/2}_k(\Cal{Z})$, i.e., $\Cal{M}^\theta_\rho(\Cal{Z})\subset
\Cal{M}^{n/2}_k(\Cal{Z})$ for $\theta\ge\frac{n}{2}$.
The result shows that
$\Cal{M}^\theta_{\rho}(\Cal{Z})=\Cal{M}^\theta_{k}(\Cal{Z})$ for all
$\theta\in\{\frac{n}{2}:n\in\bb{N}\}$.
\end{proof}

The above Proposition gives a natural interpretation of conditions on probability measures in terms of moments w.r.t. $\rho$. Namely, the kernel embedding $\mu_{k}(P)$,
where kernel $k$ generates the semimetric $\rho$, exists for every
$P$ with finite half-moment w.r.t.~$\rho$, and thus, MMD between $P$ and $Q$, $\gamma_{k}(P,Q)$
is well defined whenever both $P$ and $Q$ have finite half-moments
w.r.t.~$\rho$. If, in addition, $P$ and $Q$ have finite first moments
w.r.t.~$\rho$, then the $\rho$-energy distance between
$P$ and $Q$ is also well defined and it must be equal to the MMD, by Theorem \ref{thm: 2sample_dkern}.

Rather than imposing the condition on Borel probability measures,
one may assume that the underlying semimetric space $(\mathcal{Z},\rho)$
of negative type is itself bounded, i.e., that
$\sup_{z,z'\in\mathcal{Z}}\rho(z,z')<\infty$,
implying that distance kernels are bounded functions, and that both MMD and energy distance
are always defined. Conversely, bounded kernels (such as Gaussian)
always induce bounded semimetrics.

\begin{center}
\begin{table*}[t!]
\caption{MMD with distance kernels on data from \citet{GreFukHarSri09_short}.
Dimensionality is: \emph{Neural I} (64), \emph{Neural II} (100), \emph{Health
status} (12,600), \emph{Subtype} (2,118). The boldface denotes instances
where distance kernel had smaller Type II error in comparison to Gaussian
kernel.\label{tab:MMD-results-1}}
\vspace{2mm}
\centering{}{\small }%
\begin{tabular}{|c|c|c|c|c|c|c|c|c|}
\hline 
 &  & {\small Gauss} & {\small dist ($1/3$)} & {\small dist ($2/3$)} & {\small
dist ($1$)} & {\small dist ($4/3$)} & {\small dist ($5/3$)} & {\small dist
($2$)}\tabularnewline
\hline 
\hline 
\emph{\small Neural I}{\small{} } & {\small 1- Type I} & {\small .956} & {\small .969} & {\small .964} & {\small .949} & {\small .952} & {\small .959} & {\small .959}\tabularnewline
\hline 
{\small ($m=200$)} & {\small Type II} & {\small .118} & {\small .170} & {\small .139} & {\small .119} & \textbf{\small .109} & \textbf{\small .089} & \textbf{\small .117}\tabularnewline
\hline 
{\small Neural I } & {\small 1- Type I} & {\small .950} & {\small .969} & {\small .946} & {\small .962} & {\small .947} & {\small .930} & {\small .953}\tabularnewline
\hline 
{\small ($m=250$)} & {\small Type II} & {\small .063} & {\small .075} & \textbf{\small .045} & \textbf{\small .041} & \textbf{\small .040} & {\small .065} & \textbf{\small .052}\tabularnewline
\hline 
\emph{\small Neural II} & {\small 1- Type I} & {\small .956} & {\small .968} & {\small .965} & {\small .963} & {\small .956} & {\small .958} & {\small .943}\tabularnewline
\hline 
{\small ($m=200$)} & {\small Type II} & {\small .292} & {\small .485} & {\small .346} & {\small .319} & {\small .297} & \textbf{\small .280} & \textbf{\small .290}\tabularnewline
\hline 
\emph{\small Neural II} & {\small 1- Type I} & {\small .963} & {\small .980} & {\small .968} & {\small .950} & {\small .952} & {\small .960} & {\small .941}\tabularnewline
\hline 
{\small ($m=250$)} & {\small Type II} & {\small .195} & {\small .323} & {\small .197} & \textbf{\small .189} & \textbf{\small .194} & \textbf{\small .169} & \textbf{\small .183}\tabularnewline
\hline 
\emph{\small Subtype } & {\small 1- Type I} & {\small .975} & {\small .974} & {\small .977} & {\small .971} & {\small .966} & {\small .962} & {\small .966}\tabularnewline
\hline 
{\small ($m=10$)} & {\small Type II} & {\small .055} & {\small .828} & {\small .237} & {\small .092} & \textbf{\small .042} & \textbf{\small .033} & \textbf{\small .024}\tabularnewline
\hline 
\emph{\small Health st.} & {\small 1- Type I} & {\small .958} & {\small .980} & {\small .953} & {\small .940} & {\small .954} & {\small .954} & {\small .955}\tabularnewline
\hline 
{\small ($m=20$)} & {\small Type II} & {\small .036} & {\small .037} & {\small .039} & {\small .081} & {\small .114} & {\small .120} & {\small .165}\tabularnewline
\hline 
\end{tabular}
\end{table*}
\end{center}

\subsection{Distance Correlation}\label{subsec:distancecorr}

The notion of distance covariance extends naturally to that of \emph{distance
variance} $\mathcal{V}^{2}(X)=\mathcal{V}^{2}(X,X)$ and that of \emph{distance
correlation} (in analogy to the Pearson product-moment correlation
coefficient):
\setlength{\arraycolsep}{0.0em}
\begin{eqnarray*}
\mathcal{R}^{2}(X,Y) &{} = {}& \begin{cases}
\frac{\mathcal{V}^{2}(X,Y)}{\mathcal{V}(X)\mathcal{V}(Y)}, &
\mathcal{V}(X)\mathcal{V}(Y)>0,\\
0, & \mathcal{V}(X)\mathcal{V}(Y)=0.
\end{cases}
\end{eqnarray*}
Distance correlation also has a straightforward interpretation in
terms of kernels as:
\setlength{\arraycolsep}{0.0em}
\begin{eqnarray*}
\mathcal{R}^{2}(X,Y) &{} = {}&
\frac{\mathcal{V}^{2}(X,Y)}{\mathcal{V}(X)\mathcal{V}(Y)}\\
 &{} ={} &
\frac{\gamma_{k}^{2}(P_{XY},P_{X}P_{Y})}{\gamma_{k}(P_{XX},P_{X}P_{X})\gamma_{k}
(P_{YY},P_{Y}P_{Y})}\\
 & {}={} & \frac{\left\Vert \Sigma_{XY}\right\Vert _{HS}^{2}}{\left\Vert
\Sigma_{XX}\right\Vert _{HS}\left\Vert \Sigma_{YY}\right\Vert _{HS}},
\end{eqnarray*}
where covariance operator
$\Sigma_{XY}:\mathcal{H}_{k_{\mathcal{X}}}\to\mathcal{H}_{k_{\mathcal{Y}}}$
is a linear operator for which $\left\langle \Sigma_{XY}f,g\right\rangle
_{\mathcal{H}_{k_{\mathcal{Y}}}}=\mathbb{E}_{XY}\left[f(X)g(Y)\right]-\mathbb{E}
_{X}f(X)\mathbb{E}_{Y}g(Y)$,
for all $f\in\mathcal{H}_{k_{\mathcal{X}}}$ and
$g\in\mathcal{H}_{k_{\mathcal{Y}}}$,
and $\left\Vert \cdot\right\Vert _{HS}$ denotes the Hilbert-Schmidt
norm \citep{GreHerSmoBouetal05}. It is clear that $\mathcal{R}$
is invariant to scaling $(X,Y)\mapsto(\epsilon X,\epsilon Y)$, $\epsilon>0$,
whenever the corresponding semimetrics are homogeneous, i.e., whenever
$\rho_{\mathcal{X}}(\epsilon x,\epsilon x')=\epsilon\rho_{\mathcal{X}}(x,x')$,
and similarly for $\rho_{\mathcal{Y}}$. Moreover, $\mathcal{R}$
is invariant to translations $(X,Y)\mapsto(X+x',Y+y')$, $x'\in\mathcal{X}$,
$y'\in\mathcal{Y}$, whenever $\rho_{\mathcal{X}}$ and $\rho_{\mathcal{Y}}$
are translation invariant.

\subsection{Further Experiments}\label{subsec:further}

We assessed performance of two-sample tests based on distance kernels
with various exponents and compared it to that of a Gaussian kernel
on real-world multivariate datasets: \emph{Health st.} (microarray
data from normal and tumor tissues), \emph{Subtype} (microarray data
from different subtypes of cancer) and \emph{Neural I/II} (local field
potential (LFP) electrode recordings from the Macaque primary visual
cortex (V1) with and without spike events), all discussed in
\citet{GreFukHarSri09_short}.
In contrast to \citet{GreFukHarSri09_short}, we used smaller sample
sizes, so that some Type II error persists. At higher sample sizes,
all tests exhibit Type II error which is virtually zero. The results
are reported in Table \ref{tab:MMD-results-1} below. We used the
spectral test for all experiments, and the reported averages are obtained
by running 1000 trials. We note that for dataset \emph{Subtype} which
is high dimensional but with only a small number of dimensions varying
in mean, a larger exponent results in a test of greater power.

\end{document}